\newcommand{\algorithmstyle}[1]{\renewcommand{\algocf@style}{#1}}
\newtheorem{corollary}{Corollary}
\newtheorem{assumption}{Assumption}
\title{Optimistic Distributionally Robust Policy Optimization}
\author{%
  Jun Song and Chaoyue Zhao \\
  Department of Industrial Engineering\\
  University of Washington\\
  \texttt{\{juns113, cyzhao\}@uw.edu} \\
}
\begin{document}
\maketitle

\begin{abstract}
 Trust Region Policy Optimization (TRPO) and Proximal Policy Optimization (PPO), as the widely employed policy based reinforcement learning (RL) methods, are prone to converge to a sub-optimal solution as they limit the policy representation to a particular parametric distribution class. To address this issue, we develop an innovative Optimistic Distributionally Robust Policy Optimization (\texttt{ODRPO}) algorithm, which effectively utilizes Optimistic Distributionally Robust Optimization (DRO) approach to solve the trust region constrained  optimization problem without parameterizing the policies. Our algorithm improves TRPO and PPO with a higher sample efficiency and a better performance of the final policy while attaining the learning stability. Moreover, it achieves a globally optimal policy update that is not promised in the prevailing policy based RL algorithms. Experiments across tabular domains and robotic locomotion tasks demonstrate the effectiveness of our approach. 
\end{abstract}

\section{Introduction}
Model-free reinforcement learning (RL) has been extensively studied and successfully implemented in different domains, notably in video games \cite{mnih2013_dqn, mnih2015_dqn}, board games \cite{silver2016_go_game, heinrich2016_poker}, robotics \cite{guholly2017_robotics, grudic2003_pg_robotics, peters2006_pg_robotics}, and continuous control tasks \cite{duan2016_continuous, lillicrap2015_ddpg, schulmanl2016_gae_continuous}.  One prominent policy based RL approach is Policy Gradient (PG) \cite{grudic2003_pg_robotics, peters2006_pg_robotics, lillicrap2015_ddpg, sutton1999_pg, williams1992_reinforce, mnih2016_a3c, silver2014_dpg}, and its basic idea is to represent the policy with a parametric probability distribution $\pi_\theta (a|s) = P [a|s; \theta]$, such that the action $a$ in state $s$ is chosen stochastically following the policy $\pi_\theta$ controlled by parameter $\theta$. However, the determination of the step size for the policy update remains challenging in order to balance the tradeoff between learning stability and learning speed. 

Trust Region Policy Optimization (TRPO) \cite{schulman2015_trpo} addresses this issue by imposing a Kullback-Leibler (KL) divergence based constraint (trust region constraint) to restrict the size of the policy update. To approximately solve the trust region constrained optimization problem iteratively, a second order optimizer Conjugate Gradient \cite{hestenes1952_conjugategradient} is used to update policy parameters. This procedure is theoretically justified to guarantee a monotonic improvement of the expected reward. Another way to prevent an excessive shift between the update of the policy is to penalize the update. For example, Proximal Policy Optimization (PPO) \cite{schulman2017_ppo} penalizes on the KL divergence between the old and the new policies, and it can be efficiently solved by a first-order method like Gradient Descent. Similarly, the Behavior Guided Policy Gradient (BGPG) \cite{pacchiano2019_bgpg} considers the entrophy-regularized Wasserstein distance between the old and the new policies, and penalizes the Wasserstein distance to prevent large policy updates.

One limitation for TRPO and its variants such as PPO and BGPG is that they limit the policy representation to a particular parametric distribution class, such that 
the Gaussian \cite{schulman2015_trpo, schulman2017_ppo}, Beta \cite{chou2017_beta} and Delta \cite{lillicrap2015_ddpg, silver2014_dpg} distribution functions. As indicated in \cite{tessler2019_dpo}, since parametric distributions are not convex in the distribution space, optimizing over such distributions will result in local movements in the action space and thus converge to a sub-optimal solution. To address this issue, we propose an Optimistic Distributionally Robust Optimization approach to solve the trust-region constrained optimization problem in TRPO.

Distributionally Robust Optimization (DRO) is a modeling paradigm, where the objective is to find a decision $x$ that minimizes the expected cost under the most adversarial probability
distribution, i.e., $\min_{x \in \mathcal{X}} \max_{\mathbb{P} \in \mathcal{D}} \mathbb{E}_{\mathbb{P}}[Q(x,\xi)]$, where the distribution $\mathbb{P}$ of the random parameter $\xi$ is not precisely known but assumed to belong to an ambiguity set $\mathcal{D}$. Multiple types of ambiguity set $\mathcal{D}$ have been studied recently, such as moment based confidence set \cite{delage2010_dro_moment, zymler2011_dro_moment, goh2010_dro_moment}, $\phi$ divergences based confidence set \cite{hu2012_dro_kl, bental2013_dro_phi} and Wasserstein metric based confidence set \cite{kuhn2015_dro_wasserstein, zhao2018_dro_wasserstein}. Unlike the traditional gradient-based optimization methods, DRO approach releases the assumption that the distribution $\mathbb{P}$ must fall into a particular parametric probability distribution class. Instead, it considers all distributions  that are admissible.

In this paper, we utilize Optimistic DRO approach to tackle the trust region constrained optimization problem. That is, instead of considering the most adversarial probability distribution, we aim to find the most optimistic (i.e., optimal) one to maximize the total reward function. To explore the impact of different metrics on the system performance, we consider both KL divergence and Wasserstein metric to construct the trust region. We highlight our contributions as follows:
\begin{enumerate}
    \item We develop an innovative Optimistic Distributionally Robust Policy Optimization (\texttt{ODRPO}) framework that effectively utilizes Optimistic DRO approach to address the trust region constrained optimization problem. The reformulations of  the trust region constrained optimization problem are derived and the closed-form optimal policies are obtained for both KL-divergence and Wasserstein metric cases.
    
    \item We show that the iterative policy updates performed by \texttt{ODRPO} are globally optimal. Also overall, it can guarantee a monotonic performance improvement through the iterations. Compared with the traditional TRPO and PPO, our proposed approach has a better sample efficiency while maintaining the learning stability. 
    
    \item Since the policy learnt by \texttt{ODRPO} is not confined to the scope of parametric functions, this certainly opens up the possibility of converging to a better final policy. In fact, we observed significant improvements in nearly all our test cases.
    
    \item A comprehensive evaluation on several types of environment including tabular domains and robotic locomotion tasks shows the effectiveness of our approach. The implementation of the proposed \texttt{ODRPO} algorithm can be found at \url{https://github.com/kadysongbb/dr-trpo}.
\end{enumerate}

\section{Background}
We consider an infinite-horizon discounted Markov Decision Process (MDP), defined by the tuple $(\mathcal{S},\mathcal{A},P,r,\rho_0,\gamma)$, where $\mathcal{S}$ is the state space, $\mathcal{A}$ is the action space, $P: \mathcal{S} \times \mathcal{A} \times \mathcal{S} \xrightarrow{} \mathbb{R}$ is the transition probability, $r: \mathcal{S} \times \mathcal{A} \xrightarrow{} \mathbb{R}$ is the reward function, $\rho_0: \mathcal{S} \xrightarrow{} \mathbb{R}$ is the distribution of the initial state $s_0$, and $\gamma$ is the discount factor. We define the return of timestep $t$ as the accumulated discounted reward from $t$, $R_t = \sum_{k=0}^{\infty} \gamma^k r(s_{t+k},a_{t+k})$, and the performance of a stochastic policy $\pi$ as $\eta(\pi) = \mathbb{E}_{s_0,a_0,s_1 \dots} [\sum_{t=0}^{\infty} \gamma^t r(s_t,a_t)]$ where $a_t \sim \pi(a_t|s_t)$, $s_{t+1} \sim P(s_{t+1}|s_t,a_t)$. 

As shown in \cite{kakade2002_approximatelyoptimal}, the expected return of a new policy $\pi'$ can be expressed in terms of the advantage over the old policy $\pi$: $\eta(\pi') = \eta(\pi) + \mathbb{E}_{s\sim \rho^{\pi'}, a \sim \pi'} [A^{\pi}(s,a)]$, where $A^{\pi}(s,a) = \mathbb{E}[R_t|s_t = s, a_t = a; \pi] - \mathbb{E}[R_t|s_t = s; \pi]$ represents the advantage function and $\rho^{\pi}$ represents the unnormalized discounted visitation frequencies, i.e., $\rho^{\pi}(s) = \sum_{t=0}^{\infty}\gamma^t P(s_t = s)$.

Trust Region Policy Optimization (TRPO) \cite{schulman2015_trpo} introduces the following function $L_{\pi}(\pi')$ to approximate $\eta(\pi')$, by sampling the states with $\pi$ instead of $\pi'$: 

\begin{equation}
L_{\pi}(\pi') = \eta(\pi) + \mathbb{E}_{s\sim \rho^{\pi}, a \sim \pi'} [A^{\pi}(s,a)].
\label{local_approximation}
\end{equation}

To obtain the performance bound of $\eta(\pi')$, two particular probability metrics are discussed in \cite{schulman2015_trpo}: total variation metric $d_{TV}$ and KL divergence $d_{KL}$. Let $d_{TV}^{\max}(\pi',\pi) = \max_{s} d_{TV} (\pi'(\cdot|s), \pi(\cdot|s))$ and $d_{KL}^{\max}(\pi',\pi) = \max_{s} d_{KL} (\pi'(\cdot|s), \pi(\cdot|s))$, the following performance bounds can be derived \cite{schulman2015_trpo}: 
\begin{restatable}{thm}{} Let $\epsilon = \max_{s,a}|A^{\pi}(s,a)|$ and $c = \frac{4\epsilon\gamma}{(1-\gamma)^2}$, then the following bounds hold: 
\begin{equation*}
\begin{split}
&\eta(\pi') \ge L_{\pi}(\pi') - c(d_{\tiny{\mbox{TV}}}^{\max}(\pi',\pi))^2, \\
&\eta(\pi') \ge L_{\pi}(\pi') - c d_{\tiny{\mbox{KL}}}^{\max}(\pi',\pi).
\end{split}
\end{equation*}
\label{trpo_theorem}
\end{restatable}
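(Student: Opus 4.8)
The plan is to reduce the KL bound to the total-variation bound and then establish the latter by a synchronous-coupling argument on trajectories. For the reduction: Pinsker's inequality gives $\big(d_{\mathrm{TV}}(\pi'(\cdot|s),\pi(\cdot|s))\big)^2 \le d_{\mathrm{KL}}(\pi'(\cdot|s),\pi(\cdot|s))$ for each $s$, and maximizing over $s$ yields $\big(d_{\mathrm{TV}}^{\max}(\pi',\pi)\big)^2 \le d_{\mathrm{KL}}^{\max}(\pi',\pi)$; since $c>0$, the second inequality of the theorem follows from the first. Write $\alpha := d_{\mathrm{TV}}^{\max}(\pi',\pi)$; it then suffices to prove $\eta(\pi') \ge L_\pi(\pi') - c\alpha^2$.

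First I would combine the advantage identity $\eta(\pi') = \eta(\pi) + \mathbb{E}_{s\sim\rho^{\pi'},\, a\sim\pi'}[A^\pi(s,a)]$ with the definition \eqref{local_approximation} of $L_\pi(\pi')$ and, writing $\bar A(s) := \mathbb{E}_{a\sim\pi'(\cdot|s)}[A^\pi(s,a)]$ and expanding each $\rho$-expectation over the per-timestep state marginals (the $\eta(\pi)$ terms cancel), obtain $\eta(\pi') - L_\pi(\pi') = \sum_{t=0}^{\infty}\gamma^t\big(\mathbb{E}[\bar A(s_t)\mid\pi'] - \mathbb{E}[\bar A(s_t)\mid\pi]\big)$, where $s_t\mid\pi$ (resp.\ $s_t\mid\pi'$) denotes the state at step $t$ of a trajectory started from $\rho_0$ under $\pi$ (resp.\ $\pi'$). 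The goal is to bound the absolute value of this series by $c\alpha^2$.

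The heart of the argument is to put both trajectories on one probability space. For every state $s$, fix an optimal coupling of $\pi(\cdot|s)$ and $\pi'(\cdot|s)$, i.e.\ a joint law of a pair $(a,a')$ with the correct marginals and $\Pr[a\neq a'\mid s]\le\alpha$; run the two chains from a common $s_0\sim\rho_0$, at each visited state drawing the coupled pair $(a_t,a_t')$ and using shared randomness in the transition kernel so that $s_{t+1}=s_{t+1}'$ whenever $s_t=s_t'$ and $a_t=a_t'$. This yields two facts. (i) Because $\mathbb{E}_{a\sim\pi(\cdot|s)}[A^\pi(s,a)]=0$, we may write $\bar A(s)=\mathbb{E}_{(a,a')}[A^\pi(s,a')-A^\pi(s,a)]$; the integrand is $0$ on $\{a=a'\}$ and bounded by $2\epsilon$ in absolute value, so $|\bar A(s)|\le 2\alpha\epsilon$ for every $s$. (ii) Letting $n_t$ count the steps $<t$ at which the coupled actions differed, we have $s_t=s_t'$ on $\{n_t=0\}$ and $\Pr[n_t=0]\ge(1-\alpha)^t$. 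Consequently $\big|\mathbb{E}[\bar A(s_t)\mid\pi'] - \mathbb{E}[\bar A(s_t)\mid\pi]\big| = \big|\mathbb{E}\big[(\bar A(s_t')-\bar A(s_t))\,\mathbf{1}\{n_t>0\}\big]\big| \le 4\alpha\epsilon\,(1-(1-\alpha)^t)$. Summing the resulting geometric series, $|\eta(\pi')-L_\pi(\pi')| \le 4\alpha\epsilon\Big(\tfrac{1}{1-\gamma}-\tfrac{1}{1-\gamma(1-\alpha)}\Big) = \tfrac{4\alpha^2\epsilon\gamma}{(1-\gamma)(1-\gamma(1-\alpha))} \le \tfrac{4\alpha^2\epsilon\gamma}{(1-\gamma)^2} = c\alpha^2$, which is exactly the desired inequality.

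I expect the main obstacle to be the rigorous construction of the coupled probability space: verifying that the family of per-state couplings can be assembled into a well-defined joint law on trajectories whose two marginals are precisely the $\pi$- and $\pi'$-trajectory distributions, and that the shared transition randomness genuinely makes the event $\{n_t=0\}$ force $s_t=s_t'$. After that, the bound $|\bar A(s)|\le 2\alpha\epsilon$, the estimate $\Pr[n_t=0]\ge(1-\alpha)^t$, and the geometric-series computation — together with the (crucial) observation that combining a per-step magnitude $O(\alpha\epsilon)$ with a disagreement probability $O(\alpha)$ produces the $\alpha^2$ rather than $\alpha$ dependence — are all routine.
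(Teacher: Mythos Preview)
Your argument is correct and is essentially the original proof from Schulman et al.\ (2015): the Pinsker reduction from the KL bound to the TV bound, the coupling of the two trajectories via per-state maximal couplings and shared transition randomness, the bound $|\bar A(s)|\le 2\alpha\epsilon$, the estimate $\Pr[n_t>0]\le 1-(1-\alpha)^t$, and the geometric-series calculation are exactly the ingredients of that proof.

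However, there is nothing to compare against here: the present paper does \emph{not} supply its own proof of this theorem. The statement appears in the Background section as a known result imported from \cite{schulman2015_trpo}, and Appendix~\ref{appendix_proof} only proves the results of Section~\ref{ODRPO_section_has_proofs} (Theorems~\ref{prop_dual_problem_kl}--\ref{thm_optimal_policy_wass} and Proposition~\ref{prop_optimal_beta_wass}). So your proposal is a faithful reconstruction of the cited TRPO argument, which is all that can be asked for in this case.
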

Let $M_{KL}(\pi') = L_{\pi}(\pi') - cd_{KL}^{\max}(\pi',\pi)$, which can be considered as a surrogate function to approximate $\eta(\pi')$. Maximizing $M_{KL}(\pi')$ at each iteration can guarantee a non-decreasing improvement in the expected return $\eta(\pi')$. Similarly, we can claim that $M_{TV}(\pi') = L_{\pi}(\pi') - c(d_{TV}^{\max}(\pi',\pi))^2$ is a surrogate function to approximate $\eta(\pi')$ and maximizing $M_{TV}(\pi')$ can also guarantee a monotonic performance improvement of $\eta(\pi')$.

Instead of using a penalty coefficient $c$ to control the update of the policy, TRPO uses a trust region constraint to restrict that the new policy cannot deviate from the old policy too much. In addition, in TRPO, the policy $\pi$ is parameterized as $\pi_{\theta}$ with the parameter vector $\theta$. For notation brevity, we use $\theta$ to represent the policy $\pi_{\theta}$. Then, the new policy $\theta'$ is found in each iteration to maximize \eqref{local_approximation}, or equivalently, the expected value of the advantage function:
\begin{equation}
\begin{split}
& \max_{\theta'}\ \  \mathbb{E}_{s\sim \rho^{\theta}, a \sim \theta'} [ A^{\theta} (s,a)]  \\
& \text{s.t.} \ \ \mathbb{E}_{s\sim \rho^{\theta}} [d_{M} (\theta', \theta)] \le \delta, 
\end{split}
\label{trpo_problem}
\end{equation}
where $d_M$ represents a general probability metric and $\delta$ is the threshold of the distance between the new and the old policies.

\section{ODRPO: A framework to learn distributional policies}
\label{ODRPO_section_has_proofs}
In this section, we first develop the framework of the Optimistic Distributionally Robust Policy Optimization (\texttt{ODRPO}) under a general probability metric. Then we discuss the solution methodology for \texttt{ODRPO} with KL divergence and Wasserstein metric based trust regions respectively. 

\subsection{Problem formulation and ODRPO framework}
\label{ODRPO_framework}
In our \texttt{ODRPO} model, we release the restrictive assumption that a policy has to follow a parametric distribution class by allowing all admissible policies, that is, the expected distance between the new and the old policies should be within a threshold $\delta$, i.e.,
\begin{equation}
\mathbb{E}_{s\sim \rho^{\pi}} [d_{M} (\pi'(\cdot|s), \pi(\cdot|s))] \le \delta.
\label{metricset}
\end{equation}
Since the optimization goal in each policy update is to obtain the maximal expected value of the advantage function, \texttt{ODRPO} focuses on identifying the optimistic policy that falls in the set depicted in \eqref{metricset}. It is completely opposite to the traditional DRO framework, which aims to find the most adversarial distribution among all admissible distributions. Therefore, the \texttt{ODRPO} framework is shown as follows:
\begin{equation}
\begin{split}
& \max_{\pi' \in \mathcal{D}} \hspace{3mm} \mathbb{E}_{s\sim \rho^{\pi}, a \sim \pi'(\cdot|s)} [ A^{\pi} (s,a)] \\
& \text{where} \hspace{3mm} \mathcal{D} = \{\pi' | \mathbb{E}_{s\sim \rho^{\pi}} [d_{M} (\pi'(\cdot|s), \pi(\cdot|s))] \le \delta \}.
\end{split}
\label{odrpo_problem}
\end{equation}
This framework is closely related to the literature on practicing optimism when facing an uncertain environment \cite{bi2005support,srinivas2009gaussian, hanasusanto2017ambiguous, nguyen2019optimistic}. To study the impact of the choice of metrics on the system performance, we explore two metrics in particular: KL divergence and Wasserstein metric. We will investigate the policy update approach for each case and compare the performance of two cases with several benchmark approaches such as TRPO, PPO, and Advantage Actor Critic (A2C) \cite{mnih2016_a2c}. Before describing our main result, we adopt a mild and conventional assumption that generally holds true in most practical cases:

\begin{assumption}
Assume $A^{\pi} (s,a)$ is bounded, i.e., $\sup_{a \in \mathcal{A}}{|A^{\pi} (s,a)|} < \infty$, $\forall s \in \mathcal{S}$.  
\label{bounded_A}
\end{assumption}

\subsection{ODRPO with KL divergence based trust region}
\label{opt_policy_section1}
We first present the optimal policy for the \texttt{ODRPO} where the trust region is constructed using the KL divergence. In this formation, both the state space and the action space can be discrete or continuous. Theorems \ref{prop_dual_problem_kl} and \ref{thm_optimal_policy_kl} show the dual formulation and the optimal policy distribution of the \texttt{ODRPO} for the KL divergence case. The detailed proofs of the two theorems can be found in Appendix A. 

\begin{restatable}{thm}{thmdualkl}
If Assumption \ref{bounded_A} holds, then the KL trust-region constrained optimization problem in \eqref{odrpo_problem} is equivalent to the following problem: 
\begin{equation}
 \min_{\beta \ge 0} \{\beta\delta  + \mathbb{E}_{s\sim \rho^{\pi}} \beta \log \mathbb{E}_{a \sim \pi(\cdot|s)} [e^{A^\pi (s,a)/\beta}]\}.
\label{kl_trco_dual}
\end{equation}
\label{prop_dual_problem_kl}
\end{restatable}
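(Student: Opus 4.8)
The plan is to solve the inner maximization over $\pi'$ in \eqref{odrpo_problem} via Lagrangian duality, exploiting the fact that the KL-divergence constraint decomposes over states once we integrate against $\rho^\pi$. First I would write the Lagrangian of \eqref{odrpo_problem} with a single multiplier $\beta \ge 0$ attached to the constraint $\mathbb{E}_{s\sim\rho^\pi}[d_{KL}(\pi'(\cdot|s),\pi(\cdot|s))] \le \delta$, together with the per-state normalization constraints $\int_{\mathcal A} \pi'(a|s)\,da = 1$ (or the sum, in the discrete case) handled by additional multipliers $\lambda_s$. Since the objective $\mathbb{E}_{s\sim\rho^\pi}\mathbb{E}_{a\sim\pi'}[A^\pi(s,a)]$ is linear in $\pi'$ and the KL constraint set is convex (KL is jointly convex in its arguments), strong duality holds — here I would invoke a standard Slater-type argument, noting $\pi' = \pi$ is strictly feasible whenever $\delta > 0$, so there is no duality gap and we may swap $\max_{\pi'}$ with $\min_{\beta\ge 0}$.

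Next I would carry out the inner maximization over $\pi'(\cdot|s)$ for each fixed $s$ and fixed $\beta > 0$. After pulling the $\rho^\pi$-expectation outside, the state-$s$ subproblem is to maximize $\mathbb{E}_{a\sim\pi'}[A^\pi(s,a)] - \beta\, d_{KL}(\pi'(\cdot|s),\pi(\cdot|s))$ over probability distributions $\pi'(\cdot|s)$. This is the classical Gibbs variational problem: its solution is the exponential tilting $\pi'^\star(a|s) \propto \pi(a|s)\, e^{A^\pi(s,a)/\beta}$, and substituting back gives the optimal value $\beta \log \mathbb{E}_{a\sim\pi(\cdot|s)}[e^{A^\pi(s,a)/\beta}]$ for that state. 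Taking the $\rho^\pi$-expectation and adding the $\beta\delta$ term from the constraint multiplier yields exactly the objective in \eqref{kl_trco_dual}; the outer $\min_{\beta\ge 0}$ then completes the reformulation. I would also separately note that the $\beta = 0$ boundary case corresponds to the unconstrained supremum of the advantage, which is finite under Assumption \ref{bounded_A}, so the infimum over $\beta \ge 0$ is well-defined.

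The main obstacle is making the interchange of the supremum over the infinite-dimensional policy set with the dual minimization fully rigorous when $\mathcal A$ is continuous: one must be careful that the optimal tilted density is well-defined and integrable, and that the log-moment-generating function $\log\mathbb{E}_{a\sim\pi(\cdot|s)}[e^{A^\pi(s,a)/\beta}]$ is finite for all $\beta>0$ — this is precisely where Assumption \ref{bounded_A} ($\sup_{a\in\mathcal A}|A^\pi(s,a)| < \infty$ for every $s$) is used, since it guarantees the exponential moment is bounded by $e^{\|A^\pi(s,\cdot)\|_\infty/\beta}$. A secondary technical point is measurability of the per-state optimizer in $s$, which is routine given the explicit closed form. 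Once these integrability and measurability facts are in place, the state-wise decomposition and the Gibbs variational identity do all the real work, and the reformulation in \eqref{kl_trco_dual} follows.
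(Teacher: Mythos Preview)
Your proposal is correct and matches the paper's own proof essentially step for step: the paper also reformulates via the likelihood ratio, checks convexity and Slater's condition (using $\pi'=\pi$ as the interior point), forms the Lagrangian with a single multiplier $\beta\ge 0$, and solves the inner maximization to obtain the exponential-tilting solution and the log-moment-generating-function value (citing \cite{hu2012_dro_kl} for that step where you invoke the Gibbs variational identity directly). Your additional remarks on the $\beta=0$ boundary and on measurability go slightly beyond what the paper spells out, but the core argument is the same.
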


\begin{restatable}{thm}{thmoptpolicykl}
If Assumption \ref{bounded_A} holds and $\beta^*$ is the global optimal solution to \eqref{kl_trco_dual}, then the optimal policy solution to the KL trust-region constrained optimization problem in \eqref{odrpo_problem} is: 
\begin{equation}
    \pi'^*(a|s) = \frac{e^{ A^\pi (s,a)/\beta^*} \pi(a|s) }{\mathbb{E}_{a \sim \pi(\cdot|s)} [e^{ A^\pi (s,a)/\beta^*}]},  \hspace{5mm} \forall s \in \mathcal{S}, a \in \mathcal{A}.
\label{continuous_KL_optimal}
\end{equation}
\label{thm_optimal_policy_kl}
\end{restatable}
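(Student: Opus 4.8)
The plan is to solve \eqref{odrpo_problem} by Lagrangian relaxation of the KL constraint and then use strong duality to identify the optimizer in terms of the dual-optimal multiplier $\beta^{*}$ supplied by Theorem~\ref{prop_dual_problem_kl}. The first observation is that the problem is convex: the objective $\mathbb{E}_{s\sim\rho^{\pi},a\sim\pi'}[A^{\pi}(s,a)]$ is linear in the decision variable $\pi'$, the constraint functional $\pi'\mapsto\mathbb{E}_{s\sim\rho^{\pi}}[d_{KL}(\pi'(\cdot|s),\pi(\cdot|s))]$ is convex, and the per-state normalization requirements are affine. I would therefore dualize only the KL constraint with a multiplier $\beta\ge 0$, keeping the constraints $\pi'(\cdot|s)\in\Delta(\mathcal{A})$ explicit, and note that $\pi'=\pi$ is strictly feasible when $\delta>0$, so Slater's condition and hence strong duality hold.

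Next I would analyze the inner maximization of the Lagrangian $\beta\delta+\mathbb{E}_{s\sim\rho^{\pi}}\{\mathbb{E}_{a\sim\pi'(\cdot|s)}[A^{\pi}(s,a)]-\beta\,d_{KL}(\pi'(\cdot|s),\pi(\cdot|s))\}$ for a fixed $\beta>0$. Because this objective decouples across states under $\rho^{\pi}$, maximizing it is equivalent to maximizing, separately for each $s$, the quantity $\mathbb{E}_{a\sim q}[A^{\pi}(s,a)]-\beta\,d_{KL}(q,\pi(\cdot|s))$ over probability measures $q$ on $\mathcal{A}$. This is precisely the Gibbs (Donsker--Varadhan) variational problem; writing the stationarity condition of this sub-problem (a pointwise functional derivative with a Lagrange term for $\int q=1$), the unique maximizer is the tilted distribution $q^{*}(a)\propto\pi(a|s)\,e^{A^{\pi}(s,a)/\beta}$, i.e. the right-hand side of \eqref{continuous_KL_optimal} with $\beta$ in place of $\beta^{*}$, and the optimal value is $\beta\log\mathbb{E}_{a\sim\pi(\cdot|s)}[e^{A^{\pi}(s,a)/\beta}]$. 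Assumption~\ref{bounded_A} ensures the normalizing expectation is finite and strictly positive, so $q^{*}$ is a well-defined density. Substituting this value back into the Lagrangian recovers exactly the dual objective \eqref{kl_trco_dual}, which is consistent with Theorem~\ref{prop_dual_problem_kl}.

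Finally I would close the argument with complementary slackness. By strong duality the primal optimum equals the dual optimum $\beta^{*}\delta+\mathbb{E}_{s\sim\rho^{\pi}}\beta^{*}\log\mathbb{E}_{a\sim\pi(\cdot|s)}[e^{A^{\pi}(s,a)/\beta^{*}}]$, and the policy $\pi'^{*}$ defined by \eqref{continuous_KL_optimal} with $\beta=\beta^{*}$ is precisely the maximizer of the Lagrangian at $\beta^{*}$; since it is primal-feasible and meets the KKT complementary-slackness condition $\beta^{*}(\mathbb{E}_{s\sim\rho^{\pi}}[d_{KL}(\pi'^{*}(\cdot|s),\pi(\cdot|s))]-\delta)=0$ (for $\beta^{*}>0$ the Gibbs optimizer is constructed so that the constraint binds), $\pi'^{*}$ attains the optimal primal value and is therefore the optimal policy. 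One should also record that $\beta^{*}>0$, since otherwise the formula is vacuous: the derivative of the dual objective as $\beta\downarrow 0$ behaves like $\delta$ minus the (divergent) KL term of the Gibbs distribution relative to $\pi$, so the dual minimizer is interior for finite $\delta$ and non-degenerate $\pi$.

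The step I expect to be the main obstacle is making the variational computation and strong duality rigorous in the continuous-action case: one must work with densities on $\mathcal{A}$, justify interchanging the per-state maximization with the expectation over $s$ via a measurable-selection argument, verify that the tilted measure lies in the effective domain of the KL functional, and confirm attainment of the dual minimum. In the discrete-action case this reduces to elementary Lagrangian calculus; the continuous case needs these measure-theoretic details but no ideas beyond the Donsker--Varadhan principle and standard convex duality.
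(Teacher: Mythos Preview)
Your proposal is correct and follows essentially the same route as the paper: establish convexity and Slater's condition, dualize the KL constraint, solve the inner maximization via the Gibbs/exponential-tilting formula, and read off the primal optimizer at $\beta^{*}$ via strong duality. The only cosmetic difference is that the paper first changes variables to the likelihood ratio $L_s(a)=\pi'(a|s)/\pi(a|s)$ and then cites an external DRO result (Theorem~1 of \cite{hu2012_dro_kl}) for the inner problem, whereas you invoke the Donsker--Varadhan variational principle directly; your added remarks on $\beta^{*}>0$ and measurable selection are more careful than what the paper records.
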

We use gradient-based global optimization algorithms \cite{wales1998_basin_hopping, zhan2006_monte_carlo_basin, leary2000_global_optimization} to find the global optimal $\beta^*$ in \eqref{kl_trco_dual}.
The gradient of the objective in \eqref{kl_trco_dual} is derived as below:
\begin{equation*}
\begin{split}
    \frac{\partial l_0}{\partial \beta} = \delta + \mathbb{E}_{s \sim \rho^\pi}\{ \log \mathbb{E}_{a \sim \pi(\cdot|s)} [e^{A^\pi (s,a)/\beta}] - \frac{ \mathbb{E}_{a \sim \pi(\cdot|s)} [e^{A^\pi (s,a)/\beta} \times A^\pi (s,a)]}{\beta \mathbb{E}_{a \sim \pi(\cdot|s)} [e^{A^\pi (s,a)/\beta}]}   \}. 
\end{split}
\end{equation*}
In implementation, we obtain the global optimal $\beta^*$ using the Basin Hopping algorithm \cite{wales1998_basin_hopping}, which is a two-phase global optimization method that utilizes the gradient information to speed up the search in the local phase. 
 
\subsection{ODRPO with Wasserstein metric based trust region}
\label{opt_policy_section2}

In this subsection, we show that the Wasserstein metric based trust region can also guarantee a monotonic improvement given that the action space is finitely discrete, and derive the optimal policy for the \texttt{ODRPO} where the trust region is constructed using the Wasserstein metric.

Given two probability distributions $\mu$ and $v$ on the supporting space, the Wasserstein metric is defined as: 
\begin{equation*}
d_W(\mu,v) = \inf_\Pi \mathbb{E}_\Pi{[d(X,Y)]}, 
\end{equation*}
where $d(X,Y)$ is a distance between random variables $X$ and $Y$, $X \sim \mu$, $Y \sim v$, and the infimum is taken over all joint distributions $\Pi$ with marginals $\mu$ and $v$.

When the action space is finitely discrete, we have the following relationship between the total variation metric and the Wasserstein metric:  $d_{\min} d_{TV} (\pi'(\cdot|s),\pi(\cdot,s)) \le d_{W}(\pi'(\cdot|s),\pi(\cdot,s))$ where $d_{\min} = \min_{a \ne a' \in \mathcal{A}}d(a, a')$ \cite{gibbs2002choosing}. Based on this relationship and Theorem \ref{trpo_theorem}, we obtain the following corollary. 

\begin{corollary}
When the action space is finite discrete, let $c_1 =  \frac{c}{d^2{\min}}$ and $d_{W}^{\max}(\pi',\pi) = \max_{s} d_{W} (\pi'(\cdot|s), \pi(\cdot|s))$, then we have:
\begin{equation*}
    \eta(\pi') \ge L_{\pi}(\pi') - c_1 (d_{W}^{\max}(\pi',\pi))^2. 
\end{equation*}
\label{cor_wass_can_be_used}
\end{corollary}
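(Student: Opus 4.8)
The plan is to chain the total-variation performance bound from Theorem~\ref{trpo_theorem} with the metric-comparison inequality $d_{\min}\, d_{TV}(\pi'(\cdot|s),\pi(\cdot|s)) \le d_W(\pi'(\cdot|s),\pi(\cdot|s))$ that holds on a finite discrete action space. First I would note that since $\mathcal{A}$ is finite with at least two actions and $d$ is a genuine distance, $d_{\min} = \min_{a\ne a'\in\mathcal{A}} d(a,a') > 0$, so the comparison inequality can be rearranged to $d_{TV}(\pi'(\cdot|s),\pi(\cdot|s)) \le \frac{1}{d_{\min}}\, d_W(\pi'(\cdot|s),\pi(\cdot|s))$ for every state $s \in \mathcal{S}$.

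Next I would take the maximum over $s\in\mathcal{S}$ on both sides. Because the right-hand side is bounded above by $\frac{1}{d_{\min}}\, d_W^{\max}(\pi',\pi)$ uniformly in $s$, this yields $d_{TV}^{\max}(\pi',\pi) \le \frac{1}{d_{\min}}\, d_W^{\max}(\pi',\pi)$. Both sides are nonnegative, so squaring preserves the inequality, giving $(d_{TV}^{\max}(\pi',\pi))^2 \le \frac{1}{d_{\min}^2}\, (d_W^{\max}(\pi',\pi))^2$.

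Finally I would invoke the constant $c = \frac{4\epsilon\gamma}{(1-\gamma)^2} \ge 0$, which is nonnegative since $\epsilon = \max_{s,a}|A^\pi(s,a)| \ge 0$ and $\gamma\in[0,1)$. Multiplying the squared inequality by $c$ and recalling $c_1 = c/d_{\min}^2$ gives $c\,(d_{TV}^{\max}(\pi',\pi))^2 \le c_1\,(d_W^{\max}(\pi',\pi))^2$, hence $-c\,(d_{TV}^{\max}(\pi',\pi))^2 \ge -c_1\,(d_W^{\max}(\pi',\pi))^2$. Adding $L_\pi(\pi')$ to both sides and combining with the first (total-variation) inequality of Theorem~\ref{trpo_theorem} yields $\eta(\pi') \ge L_\pi(\pi') - c\,(d_{TV}^{\max}(\pi',\pi))^2 \ge L_\pi(\pi') - c_1\,(d_W^{\max}(\pi',\pi))^2$, which is exactly the claim.

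There is essentially no hard step here: the only points requiring care are that $d_{\min}>0$ so the division is legitimate, that taking the maximum over $s$ and squaring are both monotone on the relevant nonnegative range, and that multiplying by $c\ge 0$ and then negating flips the inequality in the direction we want. I would state these explicitly, but expect the whole argument to occupy only a few lines.
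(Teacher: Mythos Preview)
Your proposal is correct and matches the paper's approach exactly: the paper simply states that the corollary follows from the inequality $d_{\min}\, d_{TV}(\pi'(\cdot|s),\pi(\cdot|s)) \le d_W(\pi'(\cdot|s),\pi(\cdot|s))$ together with Theorem~\ref{trpo_theorem}, and you have supplied precisely the routine details of that chaining (taking the max over $s$, squaring, multiplying by $c\ge 0$, and combining with the TV bound).
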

Let $M_{W}(\pi') = L_{\pi}(\pi') - c_1 (d_{W}^{\max}(\pi',\pi))^2$. From Corollary \ref{cor_wass_can_be_used} we can see that $M_{W}(\pi')$ is a surrogate function that approximates $\eta(\pi')$. Since $M_{W}(\pi) = \eta(\pi)$ and $M_{W}(\pi') \le \eta(\pi')$, $\forall \pi' \neq \pi$, maximizing this surrogate function at each iteration can guarantee a non-decreasing improvement in the true objective function. 

We consider a general state space, and a finite action space with $N$ actions $a_1, a_2 \dots a_N$. The distribution function of policy $\pi$ is represented as $P^{s}_{i} = \pi(a_i|s)$, $\forall s \in \mathcal{S}, i = 1\dots N$. The following two theorems show the dual formulation and the optimal policy distribution of the \texttt{ODRPO} for the Wasserstein metric case (see Appendix A for detailed proofs). 

\begin{restatable}{thm}{thmdualwass}
If Assumption \ref{bounded_A} holds, then the Wasserstein trust-region constrained optimization problem in \eqref{odrpo_problem} is equivalent to the following problem: 
\begin{equation}
\min_{\beta \ge 0} \{\beta\delta + \mathbb{E}_{s \sim \rho^{\pi}} \sum_{i=1}^N P^s_{i}(A^{\pi} (s,a_{j^*_{s,i,\beta}}) -  \beta d(a_{j^*_{s,i,\beta}}, a_i)) \},
\label{wass_trco_dual}
\end{equation}
where $j^*_{s,i, \beta} = \text{argmax}_{j = 1\dots N} ( A^{\pi} (s,a_j) -  \beta d(a_j, a_i) )$.
\label{prop_dual_problem_wass}
\end{restatable}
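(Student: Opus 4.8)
The plan is to rewrite the Wasserstein instance of \eqref{odrpo_problem} as a linear program over transportation plans and then pass to its Lagrangian dual. First, for each state $s$ I would introduce a joint-distribution matrix $\Pi^s=(\Pi^s_{ij})\in\mathbb{R}^{N\times N}_{\ge 0}$, where $\Pi^s_{ij}$ is the probability that the old action is $a_i$ and the new action is $a_j$; its row marginals are pinned to the old policy, $\sum_j \Pi^s_{ij}=P^s_i=\pi(a_i|s)$, and its column marginals $\sum_i\Pi^s_{ij}$ are the new policy $\pi'(a_j|s)$. Since $d_W(\pi'(\cdot|s),\pi(\cdot|s))$ is the infimum of $\sum_{i,j}\Pi^s_{ij}\,d(a_i,a_j)$ over all such couplings, the constraint $\mathbb{E}_{s\sim\rho^\pi}[d_W(\pi'(\cdot|s),\pi(\cdot|s))]\le\delta$ is realizable by some $\pi'$ exactly when there is a measurable family $\{\Pi^s\}$ with $\mathbb{E}_{s\sim\rho^\pi}[\sum_{i,j}\Pi^s_{ij}\,d(a_i,a_j)]\le\delta$; both inclusions are straightforward, the one from $\{\Pi^s\}$ to $\pi'$ being immediate and the reverse requiring a measurable selection of an optimal coupling at each $s$. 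Moreover the objective $\mathbb{E}_{s\sim\rho^\pi, a\sim\pi'}[A^\pi(s,a)]=\mathbb{E}_{s\sim\rho^\pi}[\sum_{i,j}\Pi^s_{ij}\,A^\pi(s,a_j)]$ depends on $\pi'$ only through the column marginals, so $\pi'$ can be eliminated and we optimize directly over $\{\Pi^s\}$, the normalization $\sum_j\pi'(a_j|s)=1$ being automatic from $\sum_i P^s_i=1$. Thus \eqref{odrpo_problem} becomes: maximize $\mathbb{E}_{s\sim\rho^\pi}[\sum_{i,j}\Pi^s_{ij}\,A^\pi(s,a_j)]$ over $\Pi^s_{ij}\ge 0$ with $\sum_j\Pi^s_{ij}=P^s_i$, subject to $\mathbb{E}_{s\sim\rho^\pi}[\sum_{i,j}\Pi^s_{ij}\,d(a_i,a_j)]\le\delta$ --- a linear program, decomposable across states, with a single scalar inequality.

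Next I would attach a multiplier $\beta\ge 0$ to that scalar Wasserstein budget while keeping the per-state simplex constraints in the inner problem, giving the Lagrangian $\beta\delta+\mathbb{E}_{s\sim\rho^\pi}[\sum_{i,j}\Pi^s_{ij}(A^\pi(s,a_j)-\beta d(a_i,a_j))]$. For fixed $\beta$ the inner maximization over $\{\Pi^s\}$ separates across states and across rows $i$: each row is a linear functional of $(\Pi^s_{ij})_j$ on the scaled simplex $\{\Pi^s_{ij}\ge 0,\ \sum_j\Pi^s_{ij}=P^s_i\}$, hence maximized at the vertex putting the whole mass $P^s_i$ on the index $j^*_{s,i,\beta}=\text{argmax}_{j}(A^\pi(s,a_j)-\beta d(a_j,a_i))$, using the symmetry of $d$. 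Substituting this optimal row yields the dual function $\beta\delta+\mathbb{E}_{s\sim\rho^\pi}[\sum_i P^s_i(A^\pi(s,a_{j^*_{s,i,\beta}})-\beta d(a_{j^*_{s,i,\beta}},a_i))]$, and minimizing over $\beta\ge 0$ is precisely \eqref{wass_trco_dual}.

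Finally I would establish that there is no duality gap, so that the dual value equals the value of \eqref{odrpo_problem}. The reformulated program is convex (in fact linear); it is strictly feasible when $\delta>0$, since the identity coupling (all mass on the diagonal, i.e.\ $\pi'=\pi$) incurs Wasserstein cost $0<\delta$ and thus furnishes a Slater point; and under Assumption~\ref{bounded_A} its value is finite because $\sum_{i,j}\Pi^s_{ij}=1$ bounds the objective integrand by $\sup_{a}|A^\pi(s,a)|$. Strong Lagrangian duality then justifies exchanging the maximization over couplings with the minimization over $\beta$. The step I expect to be the main obstacle is exactly this exchange in the presence of a general, possibly continuous, state space: the coupling problem is then infinite-dimensional, so I need to verify both that the row-wise maximizers $j^*_{s,i,\beta}$ can be selected measurably in $s$ (so that $s\mapsto\Pi^s$ is an admissible decision) and that Slater's condition continues to rule out a gap for the single scalar constraint; the fact that each state-slice is a finite-dimensional LP is what keeps this tractable.
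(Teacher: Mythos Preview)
Your proposal is correct and follows essentially the same approach as the paper: reformulate the Wasserstein instance of \eqref{odrpo_problem} as a linear program over transportation plans, note that the identity coupling gives a strict Slater point and that Assumption~\ref{bounded_A} bounds the objective, and then pass to the Lagrangian dual. The only differences are cosmetic---the paper parametrizes by the conditional matrix $Q^s_{ij}=\Pi^s_{ij}/P^s_i$ rather than the joint, and fully dualizes the row-sum constraints (introducing multipliers $\lambda^s_i$ and then optimizing them out) instead of keeping them in the inner problem and maximizing row-by-row on each scaled simplex as you do; you are also more explicit about the measurable-selection issue for continuous state spaces, which the paper does not address.
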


\begin{restatable}{thm}{thmoptpolicywass}
If Assumption \ref{bounded_A} holds and $\beta^*$ is the global optimal solution to \eqref{wass_trco_dual}, then the optimal policy solution to the Wasserstein trust-region constrained optimization problem in \eqref{odrpo_problem} is: 
\begin{equation}
\pi'^*(a_j|s) = \sum_{i=1}^N P^s_{i} Q^{s*}_{ij},
\label{discrete_wass_optimal}
\end{equation}
where $Q^{s*}_{ij} = 1$ if $j = j^*_{s,i,\beta^*}$ and $Q^{s*}_{ij} = 0$ otherwise. 
\label{thm_optimal_policy_wass}
\end{restatable}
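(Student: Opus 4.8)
The plan is to recover the optimal policy by tracking the optimal transport plan through the same duality argument that yields Theorem~\ref{prop_dual_problem_wass}. First I would rewrite the primal problem \eqref{odrpo_problem} with $d_M = d_W$ as a linear program over couplings. For each state $s$, introduce a joint distribution $Q^s = (Q^s_{ij})$ on $\mathcal{A}\times\mathcal{A}$ whose first marginal is the old policy, $\sum_j Q^s_{ij} = P^s_i$ with $Q^s_{ij}\ge 0$, and whose second marginal $\sum_i Q^s_{ij}$ defines the candidate new policy $\pi'(a_j|s)$. Because $d_W$ is itself an infimum over couplings and the trust-region constraint only requires $\mathbb{E}_{s\sim\rho^\pi}[d_W(\pi'(\cdot|s),\pi(\cdot|s))] \le \delta$, maximizing the advantage objective over admissible $\pi'$ is equivalent to maximizing $\mathbb{E}_{s\sim\rho^\pi}\sum_{ij} Q^s_{ij} A^\pi(s,a_j)$ jointly over $\{Q^s\}$ subject to the marginal constraint above and $\mathbb{E}_{s\sim\rho^\pi}\sum_{ij}Q^s_{ij}d(a_i,a_j)\le\delta$, with the second marginal left free. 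One direction is immediate, since $d_W(\pi'(\cdot|s),\pi(\cdot|s)) \le \sum_{ij} Q^s_{ij} d(a_i,a_j)$ for any feasible coupling; the other uses that on a finite action space the coupling polytope is compact, so the infimum defining $d_W$ is attained.

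Next I would dualize the single expectation constraint with a multiplier $\beta\ge0$, exactly as in the proof of Theorem~\ref{prop_dual_problem_wass}, and invoke strong duality, which holds because the reformulated problem is a linear program that is feasible (take $Q^s$ the identity coupling, with transport cost $0 \le \delta$) and bounded above under Assumption~\ref{bounded_A}. This produces $\min_{\beta\ge 0}\{\beta\delta + \mathbb{E}_{s\sim\rho^\pi}\max_{Q^s}\sum_{ij}Q^s_{ij}(A^\pi(s,a_j)-\beta d(a_i,a_j))\}$, where the inner maximum over $Q^s_{ij}\ge 0$ with $\sum_j Q^s_{ij}=P^s_i$ decouples across states and across the index $i$. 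For fixed $s$, $i$, $\beta$ this is the trivial problem of distributing mass $P^s_i$ among the columns $j$, so an optimal choice puts all of $P^s_i$ on $j^*_{s,i,\beta}=\text{argmax}_{j=1\dots N}(A^\pi(s,a_j)-\beta d(a_j,a_i))$, recovering the dual objective of Theorem~\ref{prop_dual_problem_wass}.

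Finally, with $\beta^*$ the global optimal dual variable, I would let $Q^{s*}$ be the $\{0,1\}$ assignment matrix selecting $j^*_{s,i,\beta^*}$, set $\pi'^*(a_j|s)=\sum_i P^s_i Q^{s*}_{ij}$, and verify primal optimality via the KKT conditions. By construction $\{P^s_i Q^{s*}_{ij}\}$ attains the inner maximum at $\beta^*$, so it remains to check primal feasibility and complementary slackness, i.e. $\mathbb{E}_{s\sim\rho^\pi}\sum_i P^s_i\, d(a_{j^*_{s,i,\beta^*}},a_i)\le\delta$ and $\beta^*\bigl(\delta - \mathbb{E}_{s\sim\rho^\pi}\sum_i P^s_i\, d(a_{j^*_{s,i,\beta^*}},a_i)\bigr)=0$. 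When $\beta^*>0$, both follow from the first-order optimality of $\beta^*$ in the convex, piecewise-linear dual objective, which forces the constraint to be active; when $\beta^*=0$, optimality of $\beta^*$ means the right derivative of the dual objective at $0$ is nonnegative, which is precisely the feasibility inequality. Substituting $\pi'^*$ into the objective then gives the common optimal value, so $\pi'^*$ solves \eqref{odrpo_problem}.

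I expect the main obstacle to be the reformulation step and its edge cases: rigorously justifying that optimizing over policies $\pi'$ (where $d_W$ is a nested infimum) coincides with optimizing over couplings with a free second marginal, ensuring measurability of $s\mapsto j^*_{s,i,\beta^*}$ under a fixed tie-breaking rule on a general (possibly continuous) state space, and handling ties in the $\text{argmax}$, where the hard $\{0,1\}$ assignment may need to be replaced by a convex combination of maximizers so that the trust-region constraint is met exactly. The remaining arguments are routine linear-programming duality and KKT bookkeeping.
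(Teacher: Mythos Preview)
Your proposal is correct and follows essentially the same route as the paper: rewrite \eqref{odrpo_problem} as a linear program over couplings with fixed first marginal $P^s_i$, dualize, and read off the optimal transport plan via complementary slackness. The only cosmetic difference is that the paper fully dualizes (introducing multipliers $\lambda^s_i$ for the marginal constraints $\sum_j Q^s_{ij}=1$ in addition to $\beta$) and then invokes complementary slackness between $Q^{s*}_{ij}$ and the constraint $A^\pi(s,a_j)P^s_i-\beta^* d(a_j,a_i)P^s_i-\lambda^{s*}_i/\rho^\pi(s)\le 0$, whereas you partially dualize only the trust-region constraint and solve the inner maximum directly; your version is arguably tidier, and your explicit check of primal feasibility and $\beta^*$-complementary slackness fills a step the paper leaves implicit, as does your acknowledgment of the tie-breaking and measurability issues that the paper simply assumes away by asserting uniqueness of $j^*_{s,i,\beta^*}$.
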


For the Wasserstein metric case, we are not able to obtain the closed-form gradient of the objective in \eqref{wass_trco_dual}, in search of the global optimal $\beta^*$. However, if an initial point $\beta_0$ is given, the closed-form expression of the local optimal solution can still be obtained. We define the distance $d(x,y) = 0$ if $x = y$ and $1$ otherwise, and $k_s = \text{argmax}_{i = 1 \dots N} A^{\pi} (s, a_i)$. Then, we have the following proposition to show the optimal value of $\beta$ regarding different initial points: 

\begin{restatable}{prop}{propoptimalbeta}
\label{prop_optimal_beta_wass}

(1). If the initial point $\beta_0$ is in $[\max_{s, i} \{A^{\pi}(s,a_{k_s}) - A^{\pi} (s,a_i) \}, +\infty)$, the optimal $\beta$ solution is $\max_{s, i} \{A^{\pi}(s,a_{k_s}) - A^{\pi} (s,a_i) \}$. 
\\
(2). If the initial point $\beta_0$ is in $[0, \min_{s, i \ne k_s} \{A^{\pi}(s,a_{k_s}) - A^{\pi} (s,a_i) \}]$: if $\delta - \int_{s\in \mathcal{S}} \rho^{\pi} (s)  (1 -  P^{s}_{k_s}) ds < 0$, the optimal $\beta$ is $\min_{s, i \ne k_s} \{A^{\pi}(s,a_{k_s}) - A^{\pi} (s,a_i) \}$; otherwise, the optimal $\beta$ solution is $0$. 

(3). If the initial point $\beta_0$ is in $(\min_{s, i \ne k_s} \{A^{\pi}(s,a_{k_s}) - A^{\pi} (s,a_i) \}, \max_{s, i} \{A^{\pi}(s,a_{k_s}) - A^{\pi} (s,a_i) \})$, we construct sets $I_s^1$ and $I_s^2$ as: 
$\textbf{\textup{for}} \hspace{1mm} s \in \mathcal{S}, i \in \{1,2\dots N\} \hspace{1mm}: \hspace{1mm} \textbf{\textup{if}} \hspace{1mm} \beta_0 \ge A^{\pi}(s,a_{k_s}) - A^{\pi} (s,a_i) \hspace{1mm} \textbf{\textup{then}} \hspace{1mm} \textup{\text{Add}}\hspace{1mm} i \hspace{1mm} \textup{\text{to}} \hspace{1mm} I_s^1 \hspace{1mm} \textbf{\textup{else}} \hspace{1mm} \textup{\text{Add}} \hspace{1mm} i \hspace{1mm} \textup{\text{to}} \hspace{1mm} I_s^2$. Then, if $\delta - \mathbb{E}_{s \sim \rho^{\pi}}\sum_{i \in I_s^2} P^s_{i} < 0$, the optimal $\beta$ is $\min_{s \in \mathcal{S}, i \in I_s^2} \{ A^{\pi}(s,a_{k_s}) - A^{\pi} (s,a_i) \}$; otherwise, the optimal $\beta$ is $\max_{s \in \mathcal{S}, i \in I_s^1} \{A^{\pi}(s,a_{k_s}) - A^{\pi} (s,a_i)\}$. \end{restatable}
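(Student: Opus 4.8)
The plan is to analyze the one-dimensional objective $g(\beta) := \beta\delta + \mathbb{E}_{s\sim\rho^\pi}\sum_{i=1}^N P^s_i\bigl(A^\pi(s,a_{j^*_{s,i,\beta}}) - \beta\, d(a_{j^*_{s,i,\beta}},a_i)\bigr)$ from Theorem~\ref{prop_dual_problem_wass}, specialized to the $0/1$ distance $d$. First I would write out $j^*_{s,i,\beta}$ explicitly for this distance: for index $i$, the candidate $j=i$ gives value $A^\pi(s,a_i)$ (no penalty), while any $j\neq i$ gives $A^\pi(s,a_j)-\beta$, whose best over $j\neq i$ is $A^\pi(s,a_{k_s})-\beta$ when $i\neq k_s$, or (the second-largest advantage) $-\beta$ otherwise — but since we only need the per-$i$ inner term, the key observation is that the switch happens exactly at the threshold $\tau_{s,i} := A^\pi(s,a_{k_s}) - A^\pi(s,a_i)$: for $\beta \ge \tau_{s,i}$ the maximizer stays at $j=i$ (inner value $A^\pi(s,a_i)$, contributing $P^s_i A^\pi(s,a_i)$ with no $\beta$-dependence), and for $\beta < \tau_{s,i}$ it jumps to $j=k_s$ (inner value $A^\pi(s,a_{k_s})-\beta$). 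Hence $g$ is piecewise linear in $\beta$, with breakpoints among the finitely many values $\{\tau_{s,i}\}$, and on each piece its slope is $\delta$ minus $\mathbb{E}_{s\sim\rho^\pi}$ of (total $P^s$-mass of those $i$ with $\tau_{s,i} > \beta$, i.e. the indices that have "switched" to $k_s$).

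Next I would carry out the three cases by identifying which piece contains the relevant region and reading off the slope. For case~(1), when $\beta \ge \max_{s,i}\tau_{s,i}$, every $i$ is in the "no-switch" regime, so $g$ is linear with slope exactly $\delta>0$; thus $g$ is strictly increasing on $[\max_{s,i}\tau_{s,i},\infty)$, and since $g$ is convex (as the value function of a maximization — or directly, since the slope is nondecreasing in $\beta$ as switched indices revert) the global minimizer over $\beta\ge 0$, given the initial point lies in this region and we run a local method, is the left endpoint $\max_{s,i}\tau_{s,i}$. For case~(2), when $\beta$ is below $\min_{s,i\neq k_s}\tau_{s,i}$, every $i\neq k_s$ has switched to $k_s$, so the switched mass in state $s$ is $1-P^s_{k_s}$ and the slope is $\delta - \int_{\mathcal S}\rho^\pi(s)(1-P^s_{k_s})\,ds$; if this is negative, $g$ decreases across the whole interval, pushing the local optimum to its right endpoint $\min_{s,i\neq k_s}\tau_{s,i}$; if it is nonnegative, $g$ is nondecreasing, pushing the optimum to $\beta=0$. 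Case~(3) is the same bookkeeping on an intermediate piece: $I_s^1$ collects the not-yet-switched indices (those with $\tau_{s,i}\le\beta_0$, contributing no $\beta$-dependence) and $I_s^2$ the switched ones, so the slope at $\beta_0$ is $\delta - \mathbb{E}_{s\sim\rho^\pi}\sum_{i\in I_s^2}P^s_i$; a negative slope sends the local search leftward/rightward to the nearest breakpoint that changes the sign — the smallest $\tau_{s,i}$ over $i\in I_s^2$ if we must move right to exit the decreasing piece — while a nonnegative slope sends it to the largest $\tau_{s,i}$ over $i\in I_s^1$, the nearest breakpoint to the left. I would make the endpoint identifications precise by noting that at each breakpoint the slope jumps by exactly the $\rho^\pi$-weighted $P$-mass of the indices whose threshold equals that breakpoint, so the sign of the slope changes monotonically and the claimed endpoints are precisely where the optimum of the convex piecewise-linear $g$ sits relative to the stated initial region.

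The main obstacle I anticipate is handling ties and boundary breakpoints cleanly: when several $\tau_{s,i}$ coincide, or when $A^\pi(s,a_i)=A^\pi(s,a_{k_s})$ for more than one $i$ (so $\tau_{s,i}=0$), the argmax $j^*_{s,i,\beta}$ is nonunique and the inner value — though still well-defined — must be matched against the formula in Theorem~\ref{prop_dual_problem_wass}; I would resolve this by checking that the per-$i$ \emph{inner objective value} (not the argmax itself) is continuous in $\beta$, so $g$ is genuinely continuous and convex, and that any consistent tie-breaking yields the same $g$. A second, lesser subtlety is that Proposition~\ref{prop_optimal_beta_wass} speaks of "the optimal $\beta$ solution" relative to a given initial point $\beta_0$ — I would interpret this as: a local-search/Basin-Hopping-type method started at $\beta_0$ converges to the stated point, which follows once we know $g$ is convex and piecewise linear with the slope behavior above, since then the local optimum reachable from $\beta_0$ is the endpoint of the maximal interval of constant-sign slope containing $\beta_0$, intersected with $[0,\infty)$. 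Everything else is the routine slope computation sketched above.
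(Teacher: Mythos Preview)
Your proposal is correct and follows essentially the same approach as the paper: identify, for the $0/1$ distance, that the inner maximizer $j^*_{s,i,\beta}$ switches from $i$ to $k_s$ exactly at the threshold $\tau_{s,i}=A^\pi(s,a_{k_s})-A^\pi(s,a_i)$, observe that $g$ is linear in $\beta$ on the piece determined by $\beta_0$ with slope $\delta-\mathbb{E}_{s\sim\rho^\pi}\sum_{i\,:\,\tau_{s,i}>\beta_0}P^s_i$, and then read off the local minimizer as the appropriate endpoint of that piece in each of the three regimes. Your extra observations (explicit piecewise linearity and convexity of $g$, continuity through ties) are not in the paper's proof but only sharpen the same argument; the one small slip is writing ``$\delta>0$'' in case~(1) where the paper only needs $\delta\ge 0$, which does not affect the conclusion.
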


Based on Proposition \ref{prop_optimal_beta_wass}, we can use a two-phase global optimization method \cite{schoen2002_twophase} to find the global optimal solution $\beta^*$. 

\section{ODRPO algorithm and discussion}
\label{drtrpo_alg_sec}

In this section, we first propose our \texttt{ODRPO} algorithm based on the previous results. Then we highlight the major advantages of the proposed \texttt{ODRPO} algorithm as compared to the original TRPO algorithm. 

\subsection{ODRPO algorithm}
Let $\pi_{k+1} = \mathbb{F} (\pi_k)$ denotes the policy update in \eqref{continuous_KL_optimal} if the KL divergence is used to construct the trust region, and let $\pi_{k+1} = \mathbb{G} (\pi_k)$ denotes the policy update in \eqref{discrete_wass_optimal} if the Wasserstein metric is used. Based on Theorems \ref{thm_optimal_policy_kl} and \ref{thm_optimal_policy_wass}, we develop the following \texttt{ODRPO} algorithm: 

\begin{algorithm}[H]
\SetAlgoLined
Input: number of iterations $K$, learning rate $\alpha$ \\
Initialize policy $\pi_0$ and value network $V_{\psi_0}$ with random parameter $\psi_0$\\
 \For{$k = 0,1,2 \dots K$}{
  Collect a set of trajectories $\mathcal{D}_k$ on policy $\pi_k$
  \\
  For each timestep $t$ in each trajectory, compute total returns $G_t$ \\
  \hspace{3mm} and estimate advantages $\hat{A}_t^{\pi_k}$
  \\
  Update value: \\
  \hspace{2cm} $\begin{aligned} \psi_{k+1} \xleftarrow[]{} \psi_{k} - \alpha \nabla_{\psi_{k}} \sum (G_t - V_{\psi_k}(s_t))^2 \end{aligned}$
  \\
  Update policy: \\
  \hspace{2cm} $\pi_{k+1} \xleftarrow[]{} \mathbb{F} (\pi_k)$ \\
  \hspace{1.5cm} \text{or} \hspace{0.5mm} $\pi_{k+1} \xleftarrow[]{} \mathbb{G} (\pi_k)$
 }
 \caption{\texttt{ODRPO} Algorithm}
 \label{odrpo_algorithm}
\end{algorithm}

The trajectories sampled in the algorithm can either be complete or partial. If the trajectory is complete, we obtain the total return by using the accumulated discounted rewards $G_t  = \sum_{k=0}^{T-t-1} \gamma^k r_{t+k}$. If the trajectory is partial, we estimate the total return by using the multi-step temporal difference (TD) methods \cite{asis2017_multistep}: $\hat{G}_{t:t+n} = \sum_{k=0}^{n-1} \gamma^k r_{t+k} + \gamma^n V(s_{t+n})$. In the advantage estimation step, we can use the Monte Carlo advantage estimation, i.e., $\hat{A}_t^{\pi_k}=G_t - V_{\psi_k}(s_t)$ or alternative advantage estimation methods such as Generalized Advantage Estimation (GAE) \cite{schulmanl2016_gae_continuous}, which provides an explicit control over the bias-variance trade-off. 

\subsection{Same stability, better sample efficiency and better performance}

The original TRPO approximately solves the trust-region constraint optimization problem in \eqref{trpo_problem} by using the conjugate gradient algorithm, after making a linear approximation to the objective and a quadratic approximation to the constraint. These approximations will not prevent the method from performing large steps of policy update that violate the KL constraint, and therefore cause the degradation of the system performance. To overcome it, TRPO adds a line search which can reject the update that either violates the KL constraint or decreases the objective of the surrogate function. However, due to the approximations, the new policy accepted by the line search may not be the optimal policy within the trust region. Another factor that may lead TRPO to perform sub-optimal policy updates and converge to a sub-optimal final policy is that in TRPO, the policy is assumed to belong to a particular parametric distribution class (i.e., $\pi(a|s) = P [a|s; \theta]$) \cite{tessler2019_dpo}. Since the optimal policy does not necessarily belong to this particular distribution class, it may be excluded when TRPO performs policy updates.

The proposed \texttt{ODRPO} algorithm overcomes the above limitations by utilizing DRO. At each iteration of the \texttt{ODRPO} policy update, we use DRO to exactly solve the trust-region constraint optimization problem in \eqref{odrpo_problem}, without making any approximation. Also, in DRO, we consider all admissible policies (i.e., distributional policies within the trust region). Therefore the new policy is always the one that maximizes the surrogate objective within the trust region centered on the old policy. Thus, the trust-region constraint is satisfied in each update and the globally optimal policy update can be obtained in each iteration. 

To conclude, \texttt{ODRPO} has the following advantages as compared with TRPO: 
\begin{itemize}
    \item \texttt{ODRPO} maintains the stability of TRPO since the trust region constraint is always satisfied.
    \item \texttt{ODRPO} has a better sample efficiency since it performs updates more often (accepts all updates), and in each update, the policy obtained by \texttt{ODRPO} is globally optimal within the trust region.
    \item \texttt{ODRPO} converges to a better final policy since \texttt{ODRPO} learns distributional policies while TRPO limits policies to a particular parametric distribution class. 
\end{itemize}

\section{Experiments}
\label{experiment_section}
In this section, we conduct an experimental evaluation of \texttt{ODRPO} on tabular domains and robotic locomotion tasks. We use \texttt{ODRPO-KL} to denote the \texttt{ODRPO} algorithm with KL divergence and \texttt{ODRPO-Wass} to denote the \texttt{ODRPO} algorithm with Wasserstein metric. We compare the performance of our proposed method with several benchmarks, including TRPO \cite{schulman2015_trpo}, PPO \cite{schulman2017_ppo} and A2C \cite{mnih2016_a2c} \footnote{We use the implementations of TRPO, PPO and A2C from OpenAI Baselines \cite{openaibaseline} for MuJuCo tasks and Stable Baselines \cite{stable-baselines} for other tasks. }
. We consider comparing with A2C because it is similar to our \texttt{ODRPO} algorithm in the sense that both of them are simple on-policy actor-critic methods that utilize the advantage information to perform policy updates. 
\subsection{Tabular domains}
We evaluate \texttt{ODRPO} on a set of tasks including Taxi \cite{dietterich1998_taxi}, Chain \cite{richard1998_chain} and Cliff Walking \cite{sutton2018_cliff_walking}, which are intentionally designed to test the exploration ability of the algorithms. The tabular domain has a special environment structure with a discrete state space and a discrete action space. Thus, we use an array of size $|\mathcal{S}| \times |\mathcal{A}|$ to represent the policy $\pi(a|s)$. For the value function, we use a neural net to smoothly update the values. The performance of \texttt{ODRPO} is compared to the performance of TRPO, PPO and A2C with the same neural net structure. Each algorithm is evaluated $5$ times with a random initialization.

As shown in Figure \ref{fig:tabular}, the performance of \texttt{ODRPO} and TRPO is manifestly better than the first-order optimization methods A2C and PPO. Between the trust region based methods, \texttt{ODRPO} outperforms TRPO in most tasks, except in Chain, where the performance of these two methods are not significantly different. In Cliff Walking, \texttt{ODRPO} converges to the optimum much faster than TRPO. In Taxi, \texttt{ODRPO} converges to a better optimum compared to TRPO. We further analyze the performance of the trained agent for each algorithm on the Taxi environment. As shown in Table \ref{taxi_table}, \texttt{ODRPO} has a higher successful drop-off rate and a lower task completion time while the original TRPO reaches the time limit with a drop-off rate $0$. Therefore, the results in Taxi show that \texttt{ODRPO} finds a better policy than the original TRPO. Among the variants of \texttt{ODRPO}, the relative strength of \texttt{ODRPO-KL} and \texttt{ODRPO-Wass} on convergence rate and solution quality is highly task dependent. However, a general pattern across tasks shows that \texttt{ODRPO-KL} converges faster and attains a slightly better optimum than \texttt{ODRPO-Wass} in the tabular domains.

\begin{table}[H]
\renewcommand{\arraystretch}{1.2}
\centering
\caption{Trained agents performance on Taxi (averaged over $1000$ episodes)}
\begin{adjustbox}{width=0.9\columnwidth,center}
\begin{tabular}{lllll}
 \hline
 & Successful Dropoff (+20) & Illegal Pickup/ Dropoff (-10) &  Timesteps (-1) & Average Return \\
 \hline
 \texttt{ODRPO-KL} & 0.995 & 0.732 & 18.556 & -5.976 \\
 \hline
 \texttt{ODRPO-Wass} & 0.753  & 0.232 & 70.891 & -58.151  \\
 \hline
 TRPO & 0 & 0 & 200 & -200   \\
 \hline
\end{tabular}
\label{taxi_table}
\end{adjustbox}
\end{table}

\begin{figure}[H]
\centering
\begin{minipage}{.3\textwidth}
  \centering
  \includegraphics[width=0.9\linewidth]{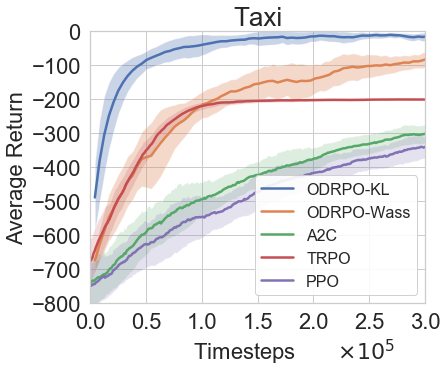}
\end{minipage}
\begin{minipage}{.3\textwidth}
  \centering
  \includegraphics[width=0.9\linewidth]{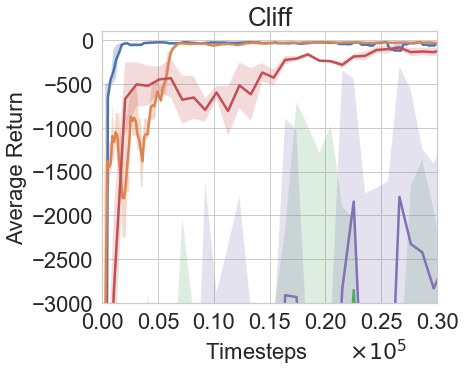}
\end{minipage}
\begin{minipage}{.3\textwidth}
  \centering
  \includegraphics[width=0.9\linewidth]{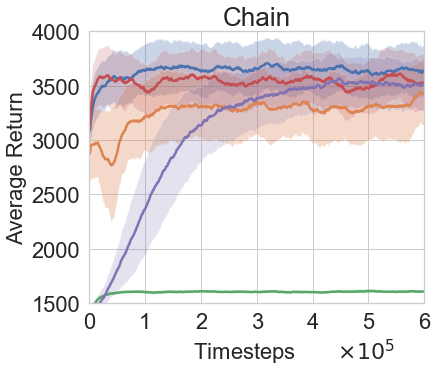}
\end{minipage}
\caption{Episode rewards during the training process for the tabular domain tasks, averaged across $5$ runs with a random initialization. The shaded area depicts the mean $\pm$ the standard deviation.}
\label{fig:tabular}
\end{figure}

\subsection{Robotic locomotion tasks}
We then integrate deep neural network architecture into the \texttt{ODRPO} framework and evaluate its performance on several locomotion tasks, including CartPole \cite{barto1983_carpole}, Acrobot \cite{geramifard2015_acrobot}, Lunar Lander \cite{brockman2016_openai_gym} and MuJuCo control suite \cite{todorov2012_mujuco}. For the experiments with a discrete action space, we use two separate neural nets to represent the policy and the value. The policy neural net receives state $s$ as an input and outputs the categorical distribution of $\pi(a|s)$. For the experiments with a continuous action space, we use the generative actor critic network in \cite{tessler2019_dpo} with the target distribution defined by $\pi_{k+1} = \mathbb{F} (\pi_k)$. The performance of \texttt{ODRPO} is compared to the performance of TRPO, PPO and A2C with the same neural net structure. Each algorithm is run $5$ times with a random initialization.

\begin{wraptable}{r}{0.45\textwidth}
\caption{Timesteps to hit threshold ($\times 10^4$)}
\renewcommand{\arraystretch}{1.1}
\centering
\begin{adjustbox}{width=\linewidth,center}
\begin{tabular}{lllll}
 \hline
 & Threshold & \texttt{ODRPO}  & TRPO & PPO \\
 \hline
CartPole & 200 & 0.8 & 0.7 & 2.3  \\
Acrobot & -150 & 5.1 & 15.0 & 2.0  \\
LunarLander & 50 & 10 & 18 &  8 \\
Walker & 3000 & 45 & 112 & 52  \\
Hopper & 3000 & 64 & 115 & 127  \\
Reacher & -10 & 2 & 13 & 20  \\
 \hline
\end{tabular}
\label{locomotion_efficiency_table}
\end{adjustbox}
\end{wraptable}

\textbf{Sample efficiency: }
Table \ref{locomotion_efficiency_table} lists the timesteps required by \texttt{ODRPO}, TRPO and PPO to hit a predefined threshold. As seen in Table \ref{locomotion_efficiency_table}, in most cases, \texttt{ODRPO} reduces the number of timesteps by at least half as compared with TRPO. Also \texttt{ODRPO} requires less timesteps than PPO in most tasks. 

\textbf{Final performance/quality: }
Figure \ref{fig:locomotion_control} shows the episode rewards during the training process and Table \ref{locomotion_performance_table} shows the performance improvement of the best among two \texttt{ODRPO} methods compared to the best among the baseline algorithms: TRPO, PPO, and A2C. As seen in Figure \ref{fig:locomotion_control}, \texttt{ODRPO} outperforms TRPO, PPO and A2C in most tasks in terms of the final performance. As shown in Table \ref{locomotion_performance_table}, \texttt{ODRPO} achieves $+75\%$ better performance in average compared to the best of TRPO, PPO and A2C. 

\begin{table}[H]
\footnotesize 
\renewcommand{\arraystretch}{1.25}
\caption{\texttt{ODRPO} best performance compared to the baseline best performance}
\begin{adjustbox}{width=\columnwidth,center}
\centering
\begin{tabular}{lllllllll}
 \hline
  CartPole  & Acrobot  & LularLander & Walker & Hopper & Reacher & Ant & HalfCheetah & InvertedPendulum \\
 \hline
  $+94$  & $-12$ & $+77$ &  $+507$ & $+68$  & $+1$ & $+3647$ & $+6330$ & $+11$\\
  $(+30\%)$ & $(-14\%)$ &  $(+82\%)$ & $(+16\%)$ & $(+3\%)$ & $(+13\%)$ & $(+208\%)$ & $(+343\%)$ & $(+1\%)$\\
 \hline
\end{tabular}
\label{locomotion_performance_table}
\end{adjustbox}
\end{table}

\textbf{Training time: } 
To train $10^5$ timesteps in the discrete locomotion tasks, the training wall-clock time is around 63s for \texttt{ODRPO}, 59s for TRPO and 70s for PPO. Therefore, \texttt{ODRPO} has a similar computational efficiency as TRPO and PPO. 

Among variants of \texttt{ODRPO}, \texttt{ODRPO-KL} converges faster than \texttt{ODRPO-Wass}. This is coherent with the convergence property discussed in \cite{hu2012_dro_kl} and \cite{kuhn2015_dro_wasserstein} that the radii of Wasserstein uncertainty set decays polynomially with a larger number of samples while the radii of KL uncertainty set decays exponentially. As a result, \texttt{ODRPO-KL} has a smaller uncertainty set than \texttt{ODRPO-Wass}, and thus converges faster.

\begin{figure}[H]
\centering
\begin{minipage}{.245\textwidth}
  \centering
  \includegraphics[width=\linewidth]{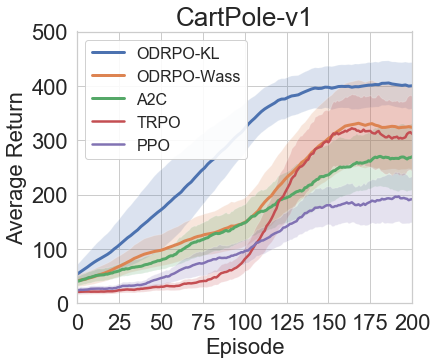}
\end{minipage}
\begin{minipage}{.245\textwidth}
  \centering
  \includegraphics[width=\linewidth]{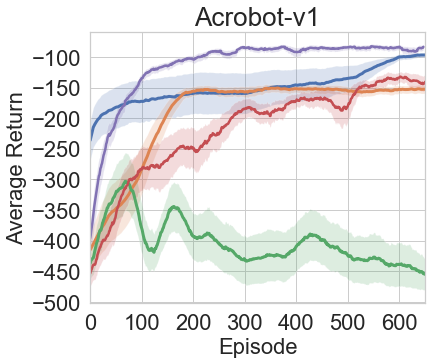}
\end{minipage}
\begin{minipage}{.245\textwidth}
  \centering
  \includegraphics[width=\linewidth]{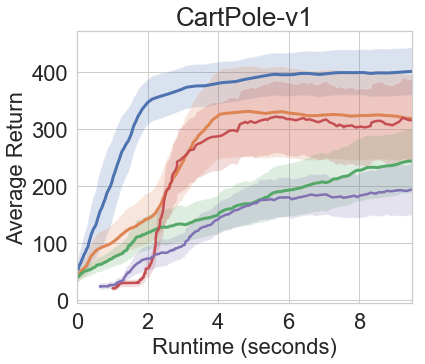}
\end{minipage}
\begin{minipage}{.245\textwidth}
  \centering
  \includegraphics[width=\linewidth]{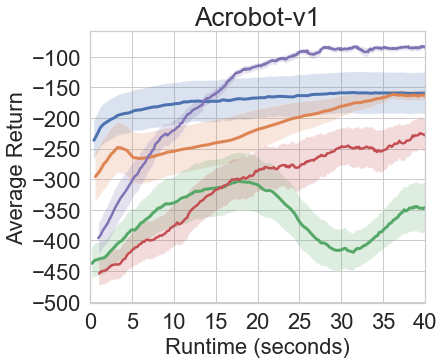}
\end{minipage}
\begin{minipage}{.245\textwidth}
  \centering
  \includegraphics[width=\linewidth]{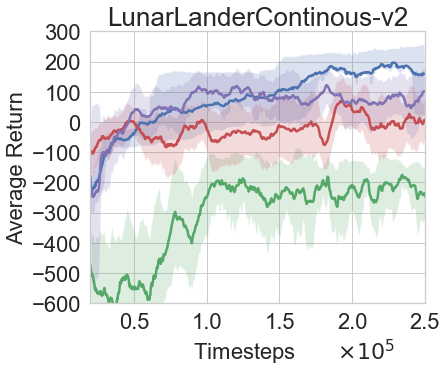}
\end{minipage}
\begin{minipage}{.245\textwidth}
  \centering
  \includegraphics[width= \linewidth]{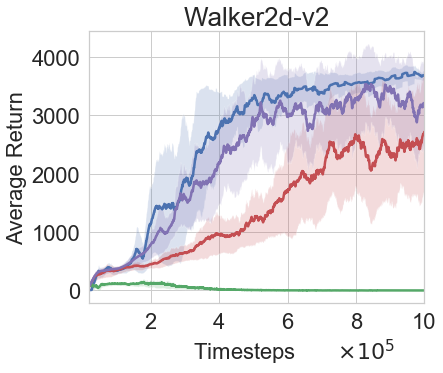}
\end{minipage}
\begin{minipage}{.245\textwidth}
  \centering
  \includegraphics[width= \linewidth]{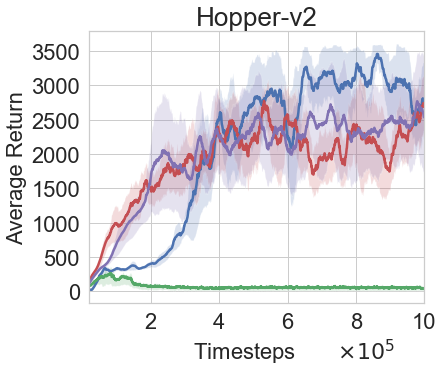}
\end{minipage}
\begin{minipage}{.245\textwidth}
  \centering
  \includegraphics[width= \linewidth]{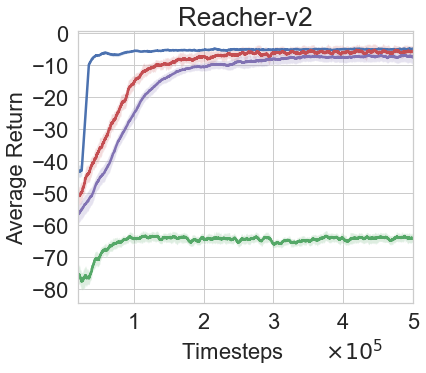}
\end{minipage}
\caption{Episode rewards during the training process for the locomotion tasks, averaged across $5$ runs with a random initialization. The shaded area depicts the mean $\pm$ the standard deviation.}
\label{fig:locomotion_control}
\end{figure}

\section{Discussion and future work}
In this paper, we present a novel RL algorithm, Optimistic Distributionally Robust Policy Optimization (\texttt{ODRPO}), which addresses the sub-optimality issue of trust region based methods such like TRPO and PPO. Our \texttt{ODRPO} method improves TRPO and PPO with better sample efficiency and better final performance, while maintaining the stable learning property. 

To our knowledge, our \texttt{ODRPO} method is the first trust region based policy optimization method that learns distributional policies. We believe that this work can be a principal alternative to the trust region based PG methods. \texttt{ODRPO} is a theoretically grounded method, but at the same time, it is a simple on-policy actor-critic algorithm. Therefore, we will focus on integrating \texttt{ODRPO} with other well-known off-policy frameworks, such as Soft Actor Critic \cite{haarnoja2018_sac} and Twin Delayed DDPG \cite{fujimoto2018_td3} as our future work. 

\newpage
\section{Broader impact}
By employing the probabilistic model-free conception into the reinforcement learning domain, we open the opportunities for researchers and practitioners to achieve better learning outcomes. The reason is three-fold.
First of all, by replacing untrustworthy probabilistic assumptions with a confidence set that covers all permissible distributions, a potentially better result is theoretically guaranteed. Second, by avoiding having to resort to heuristics, each step can be solved to its optimality using a rigorous optimization approach. Third, it greatly reduces the effort of tackling nonconvexity constraints introduced by parametric policy assumptions, which is oftentimes the major computational obstacles in solving the subroutine of the model-free reinforcement learning framework. 

\small
\bibliographystyle{unsrt}
\bibliography{ms}

\newpage
\normalsize
\appendix

\section{Theorem proof in section \ref{ODRPO_section_has_proofs}}
\label{appendix_proof}

\thmdualkl*
\begin{proof}[Proof of Theorem \ref{prop_dual_problem_kl}]

Let $\begin{aligned} L_s(a) = \frac{\pi'(a|s)}{\pi(a|s)} \end{aligned}$. Denote $\mathbb{L}_s = \{L_s' \hspace{1mm} | \hspace{1mm} \mathbb{E}_{a \sim \pi(\cdot |s)} [L_s'(a)] = 1, \hspace{1mm} L_s' \ge 0\}$. It's easy to prove that $L_s \in \mathbb{L}_s$. By using the importance sampling and the definition of KL divergence, we have:

\begin{equation*}
\begin{split}
     & \mathbb{E}_{ a \sim \pi'(\cdot|s)} [ A^{\pi} (s,a)] = \mathbb{E}_{a \sim \pi(\cdot|s)} [ A^{\pi} (s,a) L_s(a)], \\
     d_{KL} (\pi'(\cdot|s), & \pi(\cdot|s)) = \mathbb{E}_{a \sim \pi'(\cdot|s)} [\log{L_s(a)}] = \mathbb{E}_{a \sim \pi (\cdot|s)} [L_s(a) \log{L_s(a)}]. 
\end{split}
\end{equation*}

Thus, we can reformulate \eqref{odrpo_problem} with KL divergence based trust region as: 

\begin{equation}
\begin{split}
& \max_{L_s \in \mathbb{L}_s} \hspace{3mm} \mathbb{E}_{s\sim \rho^{\pi}} \mathbb{E}_{a \sim \pi(\cdot|s)} [ A^{\pi} (s,a) L_s(a)]  \\
& s.t. \hspace{3mm} \mathbb{E}_{s\sim \rho^{\pi}} \mathbb{E}_{a \sim \pi(\cdot|s)} [L_s(a) \log L_s(a)] \le \delta. 
\end{split}
\label{continuous_KL_eq2}
\end{equation}

First, it is easy to prove that for $\forall s, a$, $\mathbb{E}_{a \sim \pi(\cdot |s)}[L_s(a)]$ and $\mathbb{E}_{s\sim \rho^{\pi}} \mathbb{E}_{a \sim \pi(\cdot|s)} [ A^{\pi} (s,a) L_s(a)]$ are linear functions of $L_s(a)$, and $\mathbb{E}_{s\sim \rho^{\pi}} \mathbb{E}_{a \sim \pi(\cdot|s)} [L_s(a) \log L_s(a)]$ is a convex function of $L_s(a)$. In addition, Slater's condition holds for \eqref{continuous_KL_eq2} since there is an interior point $L_s(a) = 1$  $\forall s,a$. Meanwhile, since $A^{\pi} (s,a)$ is bounded following from Assumption \ref{bounded_A}, the objective is bounded above. Therefore, strong duality holds for \eqref{continuous_KL_eq2}. To reformulate \eqref{continuous_KL_eq2}, we consider its Lagrangian duality function: \\
\begin{equation*}
    \begin{split}
    l_0(\beta, L_s) &=   \mathbb{E}_{s\sim \rho^{\pi}}  \mathbb{E}_{a \sim \pi(\cdot|s)} [ A^{\pi} (s,a) L_s(a)]   - \beta\{\mathbb{E}_{s\sim \rho^{\pi}} \mathbb{E}_{a \sim \pi(\cdot|s)} [L_s(a) \log L_s(a)] - \delta \} \\
    & =   \mathbb{E}_{s\sim \rho^{\pi}} \mathbb{E}_{a \sim \pi(\cdot|s)}[ A^{\pi} (s,a) L_s(a) - \beta L_s(a) \log L_s(a)] +  \beta\delta,
    \end{split}
\end{equation*}

where $\beta$ is the dual variable. Then, \eqref{continuous_KL_eq2} is equivalent to its dual problem as follows: 
\begin{equation}
    \min_{\beta \ge 0} \hspace{1mm} \max_{L_s \in \mathbb{L}_s} \hspace{1mm} l_0(\beta, L_s ).
    \label{continuous_KL_dual_eq1}
\end{equation}

The inner maximization problem of \eqref{continuous_KL_dual_eq1} is equivalent to: 
\begin{equation}
    \begin{split}
      \max_{L_s \ge 0} \hspace{3mm} \mathbb{E}_{s\sim \rho^{\pi}} &\mathbb{E}_{a \sim \pi(\cdot|s)} [ A^{\pi} (s,a) L_s(a) - \beta L_s(a) \log L_s(a) ] + \beta\delta 
     \\ & s.t. \hspace{3mm} \mathbb{E}_{a \sim \pi(\cdot|s)} [L_s(a)] = 1, \hspace{5mm} \forall s \in \mathcal{S}.
    \end{split}
    \label{continuous_KL_dual_eq2}
\end{equation}

By Theorem 1 of \cite{hu2012_dro_kl}, we can obtain the optimal solution $L_s^*$ and the optimal objective value of the inner maximization problem \eqref{continuous_KL_dual_eq2} respectively as follows:
\begin{equation*}
    L_s^* (a) = \frac{e^{ A^\pi (s,a)/\beta}}{\mathbb{E}_{a \sim \pi(\cdot|s)} [e^{A^\pi (s,a)/\beta}]}, 
\label{inner_L_solution}
\end{equation*}

\begin{equation*}
l_0(\beta, L_s^* ) = \beta\delta + \mathbb{E}_{s\sim \rho^{\pi}} \beta \log \mathbb{E}_{a \sim \pi(\cdot|s)} [e^{A^\pi (s,a)/\beta}]. 
\end{equation*}

Therefore, \eqref{continuous_KL_dual_eq1} can be further reformulated as: 
\begin{equation*}
     \min_{\beta \ge 0} l_0(\beta, L_s^* ) = \min_{\beta \ge 0} \{\beta\delta  + \mathbb{E}_{s\sim \rho^{\pi}} \beta \log \mathbb{E}_{a \sim \pi(\cdot|s)} [e^{A^\pi (s,a)/\beta}]\}.
\end{equation*}
\end{proof}

\thmoptpolicykl*
\begin{proof}[Proof of Theorem \ref{thm_optimal_policy_kl}]
Based on the proof of Theorem \ref{prop_dual_problem_kl}, the optimal solution $L_s^* (a)$ to \eqref{continuous_KL_dual_eq1} is: $\begin{aligned}L_s^* (a) = \frac{e^{ A^\pi (s,a)/\beta^*}}{\mathbb{E}_{a \sim \pi(\cdot|s)} [e^{A^\pi (s,a)/\beta^*}]} \end{aligned}$. Since $\pi'^*(a|s) = L_s^* (a)\pi(a|s)$, we have \eqref{continuous_KL_optimal} holds.
\end{proof}

\thmdualwass*
\begin{proof}[Proof of Theorem \ref{prop_dual_problem_wass}]

Assume $q_s(\cdot, \cdot)$ is a joint distribution with marginals $\pi(\cdot|s)$ and $\pi'(\cdot|s)$. Let $Q_i^s(a)$ represent the conditional distribution of $\pi'(a|s)$ under $\pi(a_i|s)$, and $Q_{ij}^s= Q_i^s(a_j)$. Then $q_s (a_i,a_j) = P_i^s Q_{ij}^s$, $\pi'(a_j|s) = \sum_{i=1}^N q_s (a_i,a_j) = \sum_{i=1}^N P^s_{i} Q^s_{ij}$. In addition:  

\begin{eqnarray*}
&d_W (\pi'(\cdot|s), \pi(\cdot|s)) & = \sum_{i=1}^N \sum_{j=1}^N d(a_j, a_i) q_s (a_i,a_j) =  \sum_{i=1}^N \sum_{j=1}^N d(a_j, a_i) P^s_{i} Q^s_{ij},\label{Was1}\\
&\mathbb{E}_{a \sim \pi'(\cdot|s)} [ A^{\pi} (s,a)] & = \sum_{j=1}^N   A^{\pi} (s,a_j) \pi'(a_j|s) = \sum_{i=1}^N \sum_{j=1}^N  A^{\pi} (s,a_j) P^s_{i} Q^s_{ij}.\label{Was2}
\end{eqnarray*}

 Thus, \eqref{odrpo_problem} with Wasserstein metric based trust region can be reformulated as: 
\begin{equation}
\begin{split}
 \max_{Q^s_{ij} \ge 0} \hspace{3mm} \mathbb{E}_{s\sim \rho^{\pi}}  &\sum_{i=1}^N \sum_{j=1}^N  A^{\pi} (s,a_j) P^s_{i} Q^s_{ij} \\
 s.t.  \hspace{3mm} \mathbb{E}_{s\sim \rho^{\pi}}  &\sum_{i=1}^N \sum_{j=1}^N d(a_j, a_i) P^s_{i} Q^s_{ij}  \le \delta, \\
 &  \hspace{3mm} \sum_{j=1}^N  Q^s_{ij} = 1, \hspace{1cm} \forall s \in \mathcal{S}, i = 1 \dots N.
\end{split}
\label{discrete_wass_eq2}
\end{equation}

Both the objective function and the constraint are linear in $Q^s_{ij}$. Thus, \eqref{discrete_wass_eq2} is a convex optimization problem. Slater's condition holds for \eqref{discrete_wass_eq2} because the feasible region has an interior point, which is $Q_{ii}^s = 1$  $\forall i$, $Q_{ij}^s = 0$  $\forall i \ne j$. Meanwhile, since $A^{\pi} (s,a)$ is bounded based on Assumption \ref{bounded_A}, the objective is bounded above. Thus, strong duality holds. Therefore, \eqref{discrete_wass_eq2} is equivalent to its dual problem as follows:
 \begin{equation}
\begin{split}
    & \min_{\beta \ge 0, \lambda^s_{i}} \hspace{3mm} \beta\delta + \int_{s \in \mathcal{S}} \sum_{i=1}^N \lambda^s_{i} ds\\
    & s.t. \hspace{3mm} A^{\pi} (s,a_j) P^s_{i} - \beta d(a_j, a_i) P^s_{i} - \frac{\lambda^s_{i}}{\rho^{\pi}(s)}  \le 0, \hspace{1cm} \forall s \in \mathcal{S}, i, j = 1 \dots N.
\label{discrete_wass_dual_eq1}
\end{split}
\end{equation}

We observe that with a fixed $\beta$, the optimal $\lambda^s_{i}$ is: \begin{equation*}\lambda^{s*}_{i}(\beta) = \max_{j = 1 \dots N} \rho^{\pi}(s)P^s_{i}(A^{\pi} (s,a_j) -  \beta d(a_j, a_i)).\end{equation*}

Thus, the dual problem can be further reformulated as:
\begin{equation*}
    \min_{\beta \ge 0} \{\beta\delta + \int_{s \in \mathcal{S}} \sum_{i=1}^N \lambda^{s*}_{i}(\beta) ds \} = \min_{\beta \ge 0} \{\beta\delta + \mathbb{E}_{s \sim \rho^{\pi}} \sum_{i=1}^N P^s_{i}(A^{\pi} (s,a_{j^*_{s,i,\beta}}) -  \beta d(a_{j^*_{s,i,\beta}}, a_i)) \},
\end{equation*}
where $j^*_{s,i, \beta} = \text{argmax}_{j = 1\dots N} ( A^{\pi} (s,a_j) -  \beta d(a_j, a_i) )$. 
\end{proof}

\thmoptpolicywass*
\begin{proof}[Proof of Theorem \ref{thm_optimal_policy_wass}]
The optimal solution $\lambda^{s*}_{i}$ to \eqref{discrete_wass_dual_eq1} is: 
\begin{equation*} \lambda^{s*}_{i} = \lambda^{s*}_{i}(\beta^*) = \max_{j = 1 \dots N} \rho^{\pi}(s)P^s_{i}(A^{\pi} (s,a_j) -  \beta^* d(a_j, a_i)). \end{equation*}
Due to the complimentary slackness we have:
\begin{equation*}
    (A^{\pi} (s,a_j) P^s_{i} - \beta^* d(a_j, a_i) P^s_{i} - \frac{\lambda^{s*}_{i}}{\rho^{\pi}(s)} ) Q^{s*}_{ij} = 0, \hspace{1cm} \forall s,i,j.
\label{wass_optimal_condition2}
\end{equation*}
In this case, $Q^{s*}_{ij}$ is non-zero only when $A^{\pi} (s,a_j) P^s_{i} - \beta^* d(a_j, a_i) P^s_{i} - \frac{\lambda^{s*}_{i}}{\rho^{\pi}(s)}  = 0$, which happens when $j  = j^*_{s,i,\beta^*} = \text{argmax}_{j = 1 \dots N}  \hspace{3mm} ( A^{\pi} (s,a_j) -  \beta^* d(a_j, a_i))$. Since such $j$ is unique for each pair of $s,i$ and $\sum_{j=1}^N  Q^{s*}_{ij} = 1$, the optimal solution $Q^{s*}_{ij}$ is:

\[ Q^{s*}_{ij} = \begin{cases*}
                    1 & \hspace{3mm} if  $j = j^*_{s,i,\beta^*}$  \\
                    0 & \hspace{3mm} otherwise.
                 \end{cases*} \]%
                 
Since $\pi'^{*}(a_j|s) = \sum_{i=1}^N P^s_{i} Q^{s*}_{ij}$, we have \eqref{discrete_wass_optimal} holds. 
\end{proof}

\propoptimalbeta*

\begin{proof}[Proof of Proposition \ref{prop_optimal_beta_wass}]

(1). When $\beta \in [\max_{s, i} \{A^{\pi}(s,a_{k_s}) - A^{\pi} (s,a_i) \}, +\infty)$, we have $A^{\pi} (s,a_i) \ge A^{\pi}(s,a_{k_s}) - \beta$ for all $s \in \mathcal{S}$, $i = 1 \dots N$. Since $A^{\pi}(s,a_{k_s}) - \beta \ge A^{\pi}(s,a_{j}) - \beta$ for all $j = 1 \dots N$, we have $A^{\pi} (s,a_i) \ge A^{\pi}(s,a_{j}) - \beta$  for all  $s \in \mathcal{S}$, $i = 1 \dots N$, $j = 1 \dots N$. Thus, $j^*_{s,i, \beta} = i$, for all $s \in \mathcal{S}$,  $i = 1 \dots N$. Thus, \eqref{wass_trco_dual} can be reformulated as: 
\begin{equation*}
    \min_{\beta \ge 0} \{\beta\delta + \mathbb{E}_{s \sim \rho^{\pi}} \sum_{i=1}^N P^s_{i}A^{\pi} (s,a_i) \}.
\end{equation*}
Since $\delta \ge 0$, we have the local optimal $\beta = \max_{s, i} \{A^{\pi}(s,a_{k_s}) - A^{\pi} (s,a_i) \}$.

(2). When $\beta \in [0, \min_{s, i \ne k_s} \{A^{\pi}(s,a_{k_s}) - A^{\pi} (s,a_i) \}]$, we have $A^{\pi} (s,a_i) \le A^{\pi}(s,a_{k_s}) - \beta$ for all $s \in \mathcal{S}$,  $i = 1 \dots N$, $i \ne k_s$. Thus $j^*_{s,i, \beta} = k_s$ for all $s \in \mathcal{S}$,  $i = 1 \dots N$. The inner part of \eqref{wass_trco_dual} then is: 
\begin{equation*}
\begin{gathered}
    \beta\delta + \mathbb{E}_{s \sim \rho^{\pi}} \{ \sum_{i=1, i \ne k_s}^N P^s_{i} ( A^{\pi} (s,a_{k_s}) - \beta ) + P_{k_s}^s A^{\pi} (s,a_{k_s}) \} \\
    = \beta (\delta - \mathbb{E}_{s \sim \rho^{\pi}}  \sum_{i=1, i \ne k_s}^N P^s_{i})+ \mathbb{E}_{s \sim \rho^{\pi}}  \sum_{i=1}^N P^s_{i}  A^{\pi} (s,a_{k_s}) \\
    = \beta (\delta - \int_{s\in \mathcal{S}} \rho^{\pi} (s)  (1 -  P^{s}_{k_s}) ds ) + \mathbb{E}_{s \sim \rho^{\pi}}  \sum_{i=1}^N P^s_{i}  A^{\pi} (s,a_{k_s}).
\end{gathered} 
\end{equation*}
If $\delta - \int_{s\in \mathcal{S}} \rho^{\pi} (s)  (1 -  P^{s}_{k_s}) ds < 0$, we have the local optimal $\beta = \min_{s, i \ne k_s} \{A^{\pi}(s,a_{k_s}) - A^{\pi} (s,a_i) \}$.
If $\delta - \int_{s\in \mathcal{S}} \rho^{\pi} (s)  (1 -  P^{s}_{k_s}) ds \ge 0$, we have the local optimal $\beta = 0$.

(3). For an initial point $\beta_0$ in $(\min_{s, i \ne k_s} \{A^{\pi}(s,a_{k_s}) - A^{\pi} (s,a_i) \}, \max_{s, i} \{A^{\pi}(s,a_{k_s}) - A^{\pi} (s,a_i) \})$, we construct partitions $I_s^1$ and $I_s^2$ of the set $\{1,2 \dots N\}$ in the way described in Proposition \ref{prop_optimal_beta_wass} for all $s \in \mathcal{S}$. Consider $\beta$ in the neightborhood of $\beta_0$, i.e., $\beta \ge A^{\pi}(s,a_{k_s}) - A^{\pi} (s,a_i)$ for $s \in \mathcal{S}, i \in I_s^1$ and $\beta \le A^{\pi}(s,a_{k_s}) - A^{\pi} (s,a_i)$ for $s \in \mathcal{S}, i \in I_s^2$. Then the inner part of \eqref{wass_trco_dual} can be reformulated as: 
\begin{equation*}
\begin{gathered}
    \beta\delta + \mathbb{E}_{s \sim \rho^{\pi}} \{ \sum_{i \in I_s^1} P^s_{i}  A^{\pi} (s,a_{i})  + \sum_{i \in I_s^2} P^s_{i} ( A^{\pi} (s,a_{k_s}) - \beta ) \} \\
    =  \beta (\delta - \mathbb{E}_{s \sim \rho^{\pi}}  \sum_{i \in I_s^2} P^s_{i})+ \mathbb{E}_{s \sim \rho^{\pi}}  \{ \sum_{i \in I_s^1} P^s_{i}  A^{\pi} (s,a_{i}) + \sum_{i \in I_s^2} P^s_{i}  A^{\pi} (s,a_{k_s}) \}. \\
\end{gathered} 
\end{equation*}
If $\delta - \mathbb{E}_{s \sim \rho^{\pi}}\sum_{i \in I_s^2} P^s_{i} < 0$, we have the local optimal $\beta = \min_{s \in \mathcal{S}, i \in I_s^2} \{ A^{\pi}(s,a_{k_s}) - A^{\pi} (s,a_i) \}$. If $\delta - \mathbb{E}_{s \sim \rho^{\pi}}\sum_{i \in I_s^2} P^s_{i} \ge 0$, we have the local optimal $\beta = \max_{s \in \mathcal{S}, i \in I_s^1} \{A^{\pi}(s,a_{k_s}) - A^{\pi} (s,a_i)\}$. 
\end{proof}

\section{Implementation details}
\label{appendix_experimental}
Open Source Code:  \url{https://github.com/kadysongbb/dr-trpo}

\textbf{Visitation frequencies estimation: } The unnormalized discounted visitation frequencies are needed to compute the global optimal $\beta^*$. At the $k$-th iteration, the visitation frequencies $\rho^\pi_k$ are estimated using samples of the trajectory set $\mathcal{D}_k$. Specifically, we first initialize $\rho^\pi_k (s) = 0, \; \forall s \in S$. Then for each timestep $t$ in each trajectory from $\mathcal{D}_k$, we update $\rho^\pi_k$ as $\rho^\pi_k(s_t) \xleftarrow[]{} \rho^\pi_k (s_t) + \gamma^t / |\mathcal{D}_k|$.

\textbf{Global optimal beta estimation: } In order to avoid the computationally cumbersome step of obtaining $\beta^*$ in each iteration, we can approximate that with the $\beta$ value resulted from the first several policy updates. Small perturbations should be added to the approximate values to avoid any stagnation in updating. This approximation is rooted on the observations of the authors that $\beta^*$ does not change significantly throughout all the policy updates in the experiments carried out in the paper. This trick generally reduces the computational time and speeds up the convergence without noticeable performance degradation.

\textbf{Policy representation: } The general approach depicted in Algorithm \ref{odrpo_algorithm} allows various policy representations including arrays and neural networks. Let $\mathcal{S}_k \subseteq \mathcal{S}$ represent a subset of states to perform the policy update at the $k$-th iteration. When an array is used, the policy update step is simply $\pi_{k+1}(\cdot|s) = \mathbb{F}(\pi_{k})(\cdot|s), \; \forall s \in \mathcal{S}_k$ or $\pi_{k+1}(\cdot|s) = \mathbb{G}(\pi_{k})(\cdot|s), \hspace{1.5mm} \forall s \in \mathcal{S}_k$. When a neural network is employed, the policy update step can be achieved by obtaining the gradient descent, i.e., $\begin{aligned} \nabla \sum_{s \in \mathcal{S}_k} (\mathbb{F} (\pi_k)(\cdot|s) - \pi_{k}(\cdot|s))^2 \end{aligned}$ or $\begin{aligned} \nabla \sum_{s \in \mathcal{S}_k} (\mathbb{G} (\pi_k)(\cdot|s) - \pi_{k}(\cdot|s))^2 \end{aligned}$. 

\textbf{Policy updating strategy: } 
\begin{enumerate}[wide]
    \item State space: For environments with a discrete state space (e.g., tabular domains), the policy update is performed for all states at each iteration. For the environments with a continuous state space, a random subset of states is sampled at each iteration to perform the policy update.
    \item Action space: When the action space is discrete, we divide by the expectation over the entire action space to compute $\mathbb{F} (\pi_k)(\cdot|s)$; When the the action space is continuous, we use 256 samples to estimate this expectation. 
\end{enumerate}

\textbf{Architectural details: } Apart from the policies for tabular domains (array) and continuous action spaces (generative actor critic network), all the other policies and values are constructed using the classic MLP networks with hidden layer sizes as listed in Table \ref{appendix_hyperparamaters_table}.

\newpage
\subsection*{ODRPO hyperparamaters and additional results: }

Our main experimental results are reported in Section \ref{experiment_section}. In addition, we provide the setting of hyperparamaters and network sizes of our \texttt{ODRPO} algorithm in Table \ref{appendix_hyperparamaters_table}. And we present the numerical results of the final performance comparison among our \texttt{ODRPO} algorithm and the baseline methods (i.e., TRPO, PPO, A2C) in Table \ref{appendix_performance_table}. Besides, the episode rewards during the training process for additional MuJuCo tasks are shown in Figure \ref{appendex_more_mujuco_results}.

\begin{figure}[H]
\centering
\begin{minipage}{.30\textwidth}
  \centering
  \includegraphics[width=0.9\linewidth]{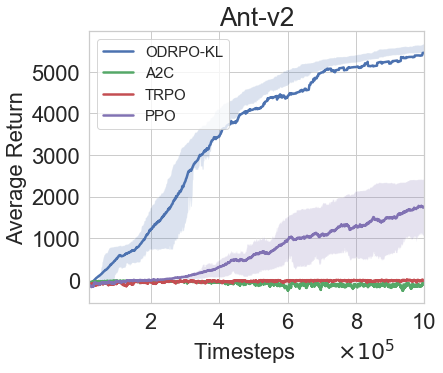}
\end{minipage}
\begin{minipage}{.30\textwidth}
  \centering
  \includegraphics[width=0.9\linewidth]{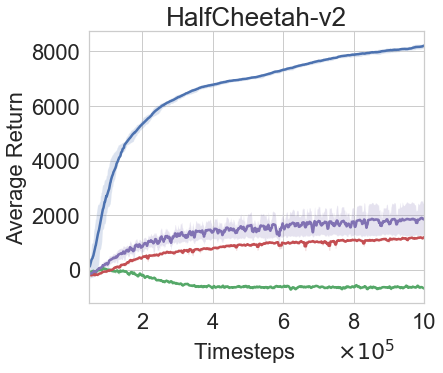}
\end{minipage}
\begin{minipage}{.30\textwidth}
  \centering
  \includegraphics[width=0.9\linewidth]{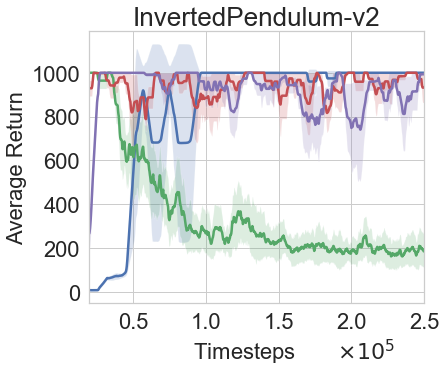}
\end{minipage}
\caption{Additional learning curves for MuJuCo tasks}
\label{appendex_more_mujuco_results}
\end{figure}

\begin{table}[H]
\centering
\footnotesize
\caption{\texttt{ODPRO} hyperparamaters and network sizes}
\renewcommand{\arraystretch}{1.25}
\begin{adjustbox}{width=\columnwidth,center}
\begin{tabular}{llllll}
 \hline
 & Taxi-v3, NChain-v0 & CartPole-v1 & Acrobot-v1 & LunarLander & Walker2d-v2, Hopper-v2 \\
 & CliffWalking-v0 & & & Continuous-v2 & Reacher-v2, HalfCheetah-v2 \\
 &&&&& Ant-v2, InvertedPendulum-v2\\
 \hline
 $\gamma$ & $0.9$ & $0.95$ & $0.95$ & $0.99$ & $0.99$ \\
 \hline
 $lr_{\pi}$ & \textbackslash & $10^{-2}$ & $5 \times 10^{-3}$ & $10^{-4}$ & $10^{-4}$ \\
 \hline
 $lr_{\text{value}}$ & $10^{-2}$ & $10^{-2}$ & $5 \times 10^{-3}$ & $10^{-4}$ & $10^{-3}$ \\
 \hline
 $|\mathcal{D}_k|$ & $60$ (Taxi); $1$ (Chain);  & 2 & 3 & \textbackslash & \textbackslash \\
 & $3$ (CliffWalking) &  &  & \\
 \hline
  $\pi$ size & 2D array & $[64,64]$ & $[64,64]$ & $[400,300]$ & $[400,300]$ \\
 \hline
 \text{Q/v} size & $[10,7,5]$ & $[64,64]$ & $[64,64]$ & $[400,300]$ & $[400,300]$ \\
 \hline
\end{tabular}
\label{appendix_hyperparamaters_table}
\end{adjustbox}
\end{table}

\begin{table}[H]
\caption{Averaged rewards over last 10\% episodes during the training process}
\renewcommand{\arraystretch}{1.5}
\centering
\begin{adjustbox}{width=\linewidth,center}
\begin{tabular}{llllll}
 \hline
Environment & \texttt{ODRPO-KL}  & \texttt{ODRPO-Wass} & TRPO & PPO & A2C \\
\hline
Taxi-v3 & $-16 \pm 10$ & $-75 \pm 17$ & $-201 \pm 1$ & $-343 \pm 36$ & $-304 \pm 27$\\ 
\hline
NChain-v0 & $3666 \pm 223$ & $3347 \pm 267$ & $3522 \pm 258$ & $3506 \pm 237$ & $1606 \pm 10$ \\
\hline
CliffWalking-v0 & $-40 \pm 23$ & $-35 \pm 15$ & $-169 \pm 36$ & $-3608 \pm 2172$ & $-5168 \pm 2247$ \\ 
\hline
CartPole-v1 & $401 \pm 42$ & $324 \pm 78$ & $307 \pm 68$ & $193 \pm 45$ & $267 \pm 61$ \\
\hline
Acrobot-v1 & $-97 \pm 4$ & $-153 \pm 7$ & $-143 \pm 13$ & $-85 \pm 5$ & $-452 \pm 28$\\
\hline
LunarLanderContinuous-v2 & $170 \pm 81$ & \hspace{0.9cm} \textbackslash & $-1 \pm 92$ & $93 \pm 85$ & $-262 \pm 130$\\
\hline
Walker2d-v2 & $3686 \pm 81$ & \hspace{0.9cm} \textbackslash & $2667 \pm 1023$ & $3179 \pm 747$ & $-4 \pm 2$\\
\hline
Hopper-v2 & $2781 \pm 537$ & \hspace{0.9cm} \textbackslash & $2713 \pm 424$ & $2590 \pm 849$ & $42 \pm 18$\\
\hline
Reacher-v2 & $-5.1 \pm 0.1$ & \hspace{0.9cm} \textbackslash & $-5.9 \pm 1.7$ & $-7.5 \pm 2.1$ & $-64.2 \pm 1.4$ \\
\hline
Ant-v2 & $5397 \pm 249$ & \hspace{0.9cm} \textbackslash & $-14 \pm 22$ & $1751 \pm 665$  & $-88 \pm 44$\\
\hline
HalfCheetah-v2 & $8174 \pm 92$ & \hspace{0.9cm} \textbackslash & $1178 \pm 47$ & $1844 \pm 590$ & $-684 \pm 46$ \\
\hline
InvertedPendulum-v2 & $1000 \pm 0$ & \hspace{0.9cm} \textbackslash & $953 \pm 47$ & $989 \pm 11$ & $197 \pm 80$ \\
\hline
\end{tabular}
\label{appendix_performance_table}
\end{adjustbox}
\end{table}

\end{document}